\crefname{section}{Sec.}{Secs.}
\crefname{table}{Tab.}{Tabs.}
\crefname{equation}{Eq.}{Eq.}
\Crefname{section}{Section}{Sections}
\Crefname{table}{Table}{Tables}
\Crefname{assumption}{Assumption}{Assumptions}
\definecolor{LightBlue}{rgb}{0.86,0.90,0.95}
\definecolor{LightGreen}{rgb}{0.86,0.95,0.90}
\def\thm@space@setup{\thm@preskip=0pt
\thm@postskip=0pt}
\newtheorem{definition}{Definition}
\newtheorem{theorem}{Theorem}
\newtheorem{assumption}{Assumption}
\newcommand{\sysname}{FairDolce}
\newcommand{\sysnameregret}{FairSDR}
\newcommand{\multiline}[1]{%
  \begin{tabularx}{\dimexpr\linewidth-\ALG@thistlm}[t]{@{}X@{}}
    #1
  \end{tabularx}
}
\begin{document}

\title{Towards Fair Disentangled Online Learning for Changing Environments}

\author{Chen Zhao}
\authornote{Both authors contributed equally to this research.}
\email{chen\_zhao@baylor.edu}
\affiliation{%
  \institution{Baylor University}
  \city{Waco, Texas}
  \country{USA}
}

\author{Feng Mi}
\authornotemark[1]
\email{feng.mi@utdallas.edu}
\affiliation{%
  \institution{University of Texas at Dallas}
  \city{Richardson, Texas}
  \country{USA}}

\author{Xintao Wu}
\email{xintaowu@uark.edu}
\affiliation{%
  \institution{University of Arkansas}
  \city{Fayetteville, Arkansas}
  \country{USA}}

\author{Kai Jiang}
\email{kai.jiang@utdallas.edu}
\affiliation{%
  \institution{University of Texas at Dallas}
  \city{Richardson, Texas}
  \country{USA}}

\author{Latifur Khan}
\email{lkhan@utdallas.edu}
\affiliation{%
  \institution{University of Texas at Dallas}
  \city{Richardson, Texas}
  \country{USA}}

\author{Christan Grant}
\email{christan@ufl.edu}
\affiliation{%
  \institution{University of Florida}
  \city{Gainesville, Florida}
  \country{USA}}
  
\author{Feng Chen}
\email{feng.chen@utdallas.edu}
\affiliation{%
  \institution{University of Texas at Dallas}
  \city{Richardson, Texas}
  \country{USA}}

\renewcommand{\shortauthors}{Chen Zhao et al.}

\begin{abstract}
  In the problem of online learning for changing environments, data are sequentially received one after another over time, and their distribution assumptions may vary frequently. Although existing methods demonstrate the effectiveness of their learning algorithms by providing a tight bound on either dynamic regret or adaptive regret, most of them completely ignore learning with model fairness, defined as the statistical parity across different sub-population (\textit{e.g.,} race and gender). Another drawback is that when adapting to a new environment, an online learner needs to update model parameters with a global change, which is costly and inefficient. Inspired by the sparse mechanism shift hypothesis \cite{scholkopf2021toward}, we claim that changing environments in online learning can be attributed to partial changes in learned parameters that are specific to environments and the rest remain invariant to changing environments. To this end, in this paper, we propose a novel algorithm under the assumption that data collected at each time can be disentangled with two representations, an environment-invariant semantic factor and an environment-specific variation factor. The semantic factor is further used for fair prediction under a group fairness constraint. To evaluate the sequence of model parameters generated by the learner, a novel regret is proposed in which it takes a mixed form of dynamic and static regret metrics followed by a fairness-aware long-term constraint. The detailed analysis provides theoretical guarantees for loss regret and violation of cumulative fairness constraints. Empirical evaluations on real-world datasets demonstrate our proposed method sequentially outperforms baseline methods in model accuracy and fairness.

\end{abstract}

\begin{CCSXML}
<ccs2012>
   <concept>
       <concept_id>10010147.10010178</concept_id>
       <concept_desc>Computing methodologies~Artificial intelligence</concept_desc>
       <concept_significance>500</concept_significance>
       </concept>
   <concept>
       <concept_id>10010147.10010257</concept_id>
       <concept_desc>Computing methodologies~Machine learning</concept_desc>
       <concept_significance>500</concept_significance>
       </concept>
   <concept>
       <concept_id>10010405.10010455</concept_id>
       <concept_desc>Applied computing~Law, social and behavioral sciences</concept_desc>
       <concept_significance>300</concept_significance>
       </concept>
  <concept>
      <concept_id>10003456.10010927</concept_id>
      <concept_desc>Social and professional topics~User characteristics</concept_desc>
      <concept_significance>100</concept_significance>
      </concept>
 </ccs2012>
\end{CCSXML}

\ccsdesc[500]{Computing methodologies~Artificial intelligence}
\ccsdesc[500]{Computing methodologies~Machine learning}
\ccsdesc[300]{Applied computing~Law, social and behavioral sciences}
\ccsdesc[100]{Social and professional topics~User characteristics}

\keywords{fairness, online learning, disentanglement, changing environments}


\maketitle

\section{Introduction}
\label{sec:intro}

Unlike offline learning approaches, where data is accumulated over time and collected at once, online learning assumes data batches are acquired as a continuous flow and sequentially received one after another, making it ideal for the real world. Although online learners can learn from new information in real-time as it arrives, state-of-the-art online learning algorithms may fail catastrophically when learning environments are dynamic and change over time, where changing environments refer to shifted distributions of data features between batches. Therefore, it requires online learning algorithms to adapt dynamically to new patterns in data sequences. 

To address changing environments, adaptive regret \cite{Daniely-2015-ICML} and dynamic regret \cite{Zinkevich-ICML-2003} are introduced. 
Adaptive regret evaluates the learner's performance on any contiguous time intervals, and it is defined as the maximum static regret \cite{Zinkevich-ICML-2003} over these intervals \cite{Daniely-2015-ICML}. 
In contrast, dynamic regret handles changing environments from the perspective of the entire learning process. It allows the comparator changes over time. 
However, minimizing dynamic regret may be less efficient because the learner needs to update model parameters with a global change against changing environments. Inspired by the sparse mechanism shift hypothesis \cite{scholkopf2021toward}, we state changing environments in online learning can be attributed to partial changes of parameters in a long run that are specific to environments. This implies that some parameters remain semantically invariant across different environments. 

Existing fairness-aware online algorithms are developed with a focus on either static or adaptive regret. Learning fairness with dynamic regret for changing environments is barely touched. Data containing bias on some sensitive characters (\textit{e.g.} race and gender) are likely collected sequentially over time. Group fairness is defined by the equality of a predictive utility across different data sub-populations, and predictions of a model are statistically independent on sensitive information. To control bias sequentially, the summation of fair constraints over time added to static loss regret is minimized \cite{OGDLC-2012-JMLR}. It ensures the total violation of fair constraints sublinearly increases in time. Although the adaptive fair regret proposed in \cite{zhao-KDD-2022} is initially designed for online changing environments, it allows the learner to make decisions at some time that do not belong to the fair domain and assumes the total number of times is known in advance. Therefore, designing fairness-aware online algorithms associated with dynamic regret for changing environments becomes desirable.


In this paper, to address the problem of fairness-aware online learning, where a sequence of data batches (\textit{e.g.} tasks) are collected one after another over time with changing task environments (see \cref{fig:RLN-PLN}), we propose a novel regret metric, namely \sysnameregret{}, followed by long-term fairness-aware constraints. To adapt to dynamic environments, we state that shifts in data distributions can be attributed to partial updates in model parameters in a long run, with some remaining invariant to changing environments. Inspired by dynamic and static regret metrics, \sysnameregret{} and the violation of cumulative fair constraints are minimized and bounded with $O(\sqrt{T(1+P_T)})$ and $O(\sqrt{T})$, respectively, where $T$ is the number of iterations and $P_T$ is the path-length of the comparator sequence. To learn a sequence of model parameters satisfying the regret, we propose a novel online learning algorithm, namely \sysname{}. 
In this algorithm, two learning networks are introduced, the representation learning network (RLN) and the prediction learning network (PLN). RLN disentangles an input with environment-invariant and environment-specific representations. It aims to ensure the semantic invariance of the learned presentation from RLN to all possible environments. Furthermore, the environment-invariant representations are used to predict class labels constrained with controllable fair notions in PLN. The main contributions of this paper are summarized\footnote{Code repository: \url{https://github.com/harderbetter/fairdolce}}:

\begin{itemize}[leftmargin=*]
    \item We propose a novel regret \sysnameregret{} that compares the cumulative loss of the learner against any sequence of comparators for changing environments, where only partial parameters need to be adapted  to the changed environments in the long run. The proposed new regret takes a mixed form of static and dynamic regret metrics, subject to a long-term fairness constraint. 
    \item To adapt to changing environments, we postulate that model parameters are updated with a local change. An effective algorithm \sysname{} is introduced, consisting of two networks: a representation learning network (RLN) and a prediction learning network (PLN). In RLN, datapoints are disentangled into two representations. With semantic representations, PLN is optimized under fair constraints.
    \item Theoretically grounded analysis justifies the effectiveness of the proposed method by demonstrating upper bounds $O(\sqrt{T(1+P_T)})$ for loss regret and $O(\sqrt{T})$ for violation of cumulative fair constraints.
    \item We validate the performance of our approach with state-of-the-art techniques on  real-world datasets. Our results demonstrate \sysname{} can effectively adapt both accuracy and fairness in changing environments and it shows substantial improvements over the best prior works.
\end{itemize}



\section{Related Work}
\label{sec:relatedwork}
    \textbf{Fairness-aware online learning.}
To sequentially ensure fairness guarantees at each time, a fairness-aware regret \cite{Vishakha-2020-AAAI} considering the trade-off between model accuracy and fairness is devised and it provides a fairness guarantee held uniformly over time. Another trend \cite{zhao-KDD-2021,zhao-KDD-2022,OGDLC-2012-JMLR,AdpOLC-2016-ICML,GenOLC-2018-NeurIPS} addressing this problem is to develop a new metric by adding a long-term fair constraint directly to the loss regret. However, when handling constrained optimization problems, the computational burden of the projection onto the fair domain may be too high when constraints are complex. For this reason, \cite{OGDLC-2012-JMLR} relaxes the output through a simpler closed-form projection. Thereafter, a number of variants of \cite{OGDLC-2012-JMLR} are proposed with theoretical guarantees by modifying stepsizes in \cite{OGDLC-2012-JMLR} to an adaptive version, adjusting to stochastic constraints \cite{Yu-2017-NIPS}, and clipping constraints into a non-negative orthant \cite{GenOLC-2018-NeurIPS}. Although such techniques achieve state-of-the-art bounds for static regrets and violation of fair constraints, they assume datapoints sampled at each time from a stationary distribution and make heavy use of the \textit{i.i.d} assumption. This does not hold when the environment changes.

\textbf{Online learning for changing environments.}
Because low static regret does not imply a good performance in changing environments, two regret metrics, dynamic regret \cite{Zinkevich-ICML-2003} and adaptive regret \cite{Hazan-2007-ARegret}, are devised to measure the learner's performance in changing environments. Adaptive regret handles changing environments from a local perspective by focusing on comparators in short intervals, in which geometric covering intervals \cite{Daniely-2015-ICML,Jun-2017-AISTATS,zhang-2020-AISTATS} and data streaming techniques \cite{gyorgy-2012-efficient} are developed. CBCE \cite{Jun-2017-AISTATS} improved the strongly adapted regret bound by combing the sleeping bandits idea with the Coin Betting algorithm. AOD \cite{zhang-2020-AISTATS} targets both dynamic and adaptive regret and proposes theoretic guarantees to minimize both regrets simultaneously. Although existing methods achieve state-of-the-art performance, a major drawback is that they immerse in minimizing objective functions but ignore the model fairness of prediction. As the first work addressing the problem of online fairness learning for changing environments, FairSAOML \cite{zhao-KDD-2022} combines tasks with a number of sets with different lengths and develops an effective algorithm inspired by expert-tracking techniques. A major drawback of FairSAOML is that (1) it assumes some tasks are known in advance which leads to delays during the learning process; (2) by designing intervals with long lengths, it is hard for a learner to adapt to new environments without leaving information from past environments behind. As a consequence, the adaptation of the learner to new environments may not perform well.

With concerns from existing works, to tackle the problem of fairness-aware online learning for changing environments, in this paper, we propose a novel regret and a learning algorithm, in which we assume only part of the model parameters is responsible for adapting to new environments and the rest are environment-invariant corresponding to fair predictions. Inspired by invariant learning strategies, the proposed algorithm \sysname{} is used to accommodate changing environments and adaptively learn the model with accuracy and fairness.

\section{Preliminaries}
\label{sec:preliminaries}
    Vectors are denoted by lowercase boldface letters. Scalars are denoted by lowercase italic letters. Sets are denoted by uppercase calligraphic letters. For more details refer to \cref{sec:notations}.

\subsection{Online Learning}
\label{sec:online-learning}
In online learning, data batches $\mathcal{D}_t$, defined as tasks, arrive one after another over time. An online machine learner can learn from new information in real-time as they arrive. Specifically, at each time, the learner faces a loss function $f_t:\mathbb{R}^d\times\Theta\rightarrow\mathbb{R}$ which does not need to be drawn from a fixed distribution and could even be chosen adversarially over time \cite{Finn-ICML-2019}. The goal of the learner over all times $T$ is to decide a sequence of model parameters $\{\boldsymbol{\theta}_t\}_{t=1}^T$ by an online learning algorithm, \textit{e.g.,} follow the leader \cite{Hannan-FTL-1957}, that performs well on the loss sequence $\{f_t(\mathcal{D}_t,\boldsymbol{\theta}_t)\}_{t=1}^T$. Particularly, to evaluate the algorithm, a standard objective for online learning is to minimize some notion of regret, defined as the overall difference between the learner's loss $\sum\nolimits_{t=1}^T f_t(\mathcal{D}_t,\boldsymbol{\theta}_t)$ and the best performance achievable by comparators. 

\textbf{Static regret.}
In general, one assumes that tasks collected over time are sampled from a fixed and stationary environment following the \textit{i.i.d} assumption. Therefore, with a sequence of model parameters learned from the learner, the objective is to minimize the accumulative loss of the learned model to that of the best \textit{fixed} comparator $\boldsymbol{\theta}\in\Theta$ in hindsight. This regret is typically referred to as static regret since the comparator is time-invariant.
\begin{equation}
    \small
\begin{aligned}
\label{eq:static-regret}
    R_s = \sum\nolimits_{t=1}^T f_t(\mathcal{D}_t,\boldsymbol{\theta}_t) - \min_{\boldsymbol{\theta}\in\Theta}\sum\nolimits_{t=1}^T f_t(\mathcal{D}_t,\boldsymbol{\theta})
\end{aligned}
\end{equation}
The goal of online learning under a stationary environment is to design algorithms such that static regret $R_s$ sublinearly grows in $T$. However, low static regret does not necessarily imply a good performance in changing environment, where tasks are sampled from various distributions, since the time-invariant comparator $\boldsymbol{\theta}$ in \cref{eq:static-regret} may behave badly. Tasks sequentially collected from non-stationary environments and distributions of them varying over time are more realistic. To address this limitation, recent advances \cite{zhang2017improved,yang-2016-icml} have introduced enhanced regret metrics, \textit{i.e.,} dynamic regret, to measure the learner's performance.

\textbf{Dynamic regret.} The dynamic regret \cite{Zinkevich-ICML-2003} is defined as the difference between the cumulative loss of the learner and that of \textit{a sequence} of comparators $\mathbf{u}_1,\cdots,\mathbf{u}_T\in\Theta$.
\begin{equation}
  \small  
\begin{aligned}
\label{eq:dynamic-regret}
    R_d = \sum\nolimits_{t=1}^T f_t(\mathcal{D}_t,\boldsymbol{\theta}_t) - \sum\nolimits_{t=1}^T f_t(\mathcal{D}_t,\mathbf{u}_t)
\end{aligned}
\end{equation}
In fact, \cref{eq:dynamic-regret} is more general since it holds for any sequence of comparators and thus includes the static regret in \cref{eq:static-regret}. Therefore, minimizing dynamic regret can automatically adapt to the nature of environments, either stationary or dynamic. However, distinct from static regret, bounding dynamic regret is challenging because one needs to establish a universal guarantee that holds for any sequence of comparators \cite{zhang-nips-2018}. An alternative solution for this challenge is to bound the regret in terms of some regularities of the comparator sequence, \textit{e.g.,} path-length \cite{Zinkevich-ICML-2003} defined in \cref{eq:path-length} which measures the temporal variability of the comparator sequence. 

As alluded to in \cref{sec:intro}, most of the state-of-the-art online techniques ignore the significance of learning by being aware of model fairness, which is an important hallmark of human intelligence. To control bias, especially ensure group fairness across different sub-populations, cumulative fairness notions are considered as constraints added on regrets.

\subsection{Group Fairness}
\label{sec:group-fairness}
In general, group fairness criteria used for evaluating and designing machine learning models focus on the relationships between the sensitive variables and the system output \cite{Zhao-ICDM-2019,Wu-2019-WWW,Zhao-ICKG-1-2020}. The problem of group unfairness prevention can be seen as a constrained optimization problem. For simplicity, we consider one binary sensitive label, \textit{e.g.} gender, in this work. However, our ideas can be easily extended to many sensitive labels with multiple levels.

Let $\mathcal{P=X\times Z\times Y\times E}$ be the data space, where $\mathcal{X}\in\mathbb{R}^d$ is an input feature space, $\mathcal{Z}\in\{-1,1\}$ is a sensitive space, $\mathcal{Y}\in\{0,1\}$ is an output space for binary classification, and $\mathcal{E}\in\mathbb{N}$ denotes an environment space. Given a task $\mathcal{D}=\{(\mathbf{x}_{i}, z_{i}, y_{i}, e_i)\}_{i=1}^n \in\mathcal{P}$ in environment $e_i\in\mathcal{E}$, a fine-grained measurement to ensure group fairness in class label prediction is to design fair classifiers by controlling the notions of fairness between sensitive subgroups $\{z_i=1\}_{i=1}^{n_1}$ and $\{z_i=-1\}_{i=1}^{n_{-1}}$ where $n_1+n_{-1}=n$, \textit{e.g.,} demographic parity \cite{Wu-2019-WWW,Lohaus-2020-ICML}.
\begin{definition}[Notions of Fairness \cite{Wu-2019-WWW,Lohaus-2020-ICML,zhao-KDD-2022}]
A classifier $\omega:\mathbb{R}^d\times\Theta\rightarrow\mathbb{R}$ is fair when its predictions are independent of the sensitive attribute $\mathbf{z}=\{z_i\}_{i=1}^n$.
To get rid of the indicator function and relax the exact values, a linear approximated form of the difference between sensitive subgroups is defined \cite{Lohaus-2020-ICML},
\begin{equation}
\small
\begin{aligned}
\label{def:fairness-notion}
    g(\mathcal{D},\boldsymbol{\theta})=\mathbb{E}_{(\mathbf{x},z,y,e)\sim\mathcal{P}}\Big[\frac{1}{\hat{p}_1(1-\hat{p}_1)}\Big(\frac{z+1}{2}-\hat{p}_1\Big)\omega(\mathbf{x}, \boldsymbol{\theta})\Big] 
\end{aligned}
\end{equation}
where $\hat{p}_1$ is an empirical estimate of $pr_1$. $pr_1$ is the proportion of samples in group $z=1$ and correspondingly $1-pr_1$ is the proportion of samples in group $z=-1$. 
\end{definition}
Notice that, in \cref{def:fairness-notion}, when $\hat{p}_1=\mathbb{P}_{(\mathbf{x},z,y,e)\in\mathcal{P}}(z=1)$, the fairness notion $g(\mathcal{D},\boldsymbol{\theta})$ is defined as the difference of demographic parity (DDP). Similarly, when $\hat{p}_1=\mathbb{P}_{(\mathbf{x},z,y,e)\in\mathcal{P}}(y=1, z=1)$, $g(\mathcal{D},\boldsymbol{\theta})$ is defined as the difference of equality of opportunity (DEO) \cite{Lohaus-2020-ICML}.
Therefore, parameters $\boldsymbol{\theta}$ in the domain of a task are feasible if they strictly satisfy the fairness constraint $g(\mathcal{D},\boldsymbol{\theta})=0$. 

\textbf{Motivations.} To tackle the problem of fairness-aware online learning in changing environments, a learner needs to update model parameters with a global change, which is costly and inefficient. Inspired by the sparse mechanism shift hypothesis \cite{scholkopf2021toward}, we state changing environments in online learning can be attributed to partial changes in learned parameters in the long run that are specific to environments. This implies that some parameters remain semantically invariant across different environments.

\section{Methodology}
\label{sec:method}
    \subsection{Settings and Problem Formulation}
\label{sec:settings and problem formulation}
We consider a general sequential setting where a learner is faced with tasks $\{\mathcal{D}_t\}_{t=1}^T$ one after another. Each of these tasks corresponds to a time, denoted as $t\in[T]$. At each time, the goal of the learner is to determine model parameters $\boldsymbol{\theta}_t$ using existing task pool $\{\mathcal{D}_i\}_{i=1}^{t-1}$ in a fair domain $\Theta$ that perform well for the task arrived at $t$.  
This is monitored by the loss function $f_t$ and the fairness notion $g_t$, wherein the fair constraint $g_t(\mathcal{D}_{t},\boldsymbol{\theta}_t)=0$ is satisfied and $f_t(\mathcal{D}_{t},\boldsymbol{\theta}_t)$ is minimized. To adapt to changing environments, crucially, model parameters $\boldsymbol{\theta}_t=\{\boldsymbol{\theta}_t^s,\boldsymbol{\theta}_t^v,\boldsymbol{\theta}_t^d,\boldsymbol{\theta}_t^{cls}\}$ can be partitioned into multiple elements, specifically in which
$\boldsymbol{\theta}_t^s$ captures the semantic information of data through a semantic encoder $h_s:\mathcal{X}\times\Theta\rightarrow\mathcal{S}$, and $\boldsymbol{\theta}^{cls}_t$ is used for prediction under fair constraints. $\boldsymbol{\theta}_t^v$ and $\boldsymbol{\theta}_t^d$
are parameters, later introduced in \cref{sec:assumptions,sec:learning-dynamic}, for encoding the environmental information and decoding latent representations, respectively, in order to adaptively train a good $\boldsymbol{\theta}^{s}_t$.
For data batches sampled from heterogeneous distributions at different times, $\boldsymbol{\theta}^s_t$ corresponds to adapting to changing environments by encoding samples to a latent semantic space. With latent factors (representations) encoded from the semantic space as inputs, $\boldsymbol{\theta}^{cls}_t$ is time-invariant in the long run.
\begin{figure*}[!t]
    \centering
    \includegraphics[width=0.9\linewidth]{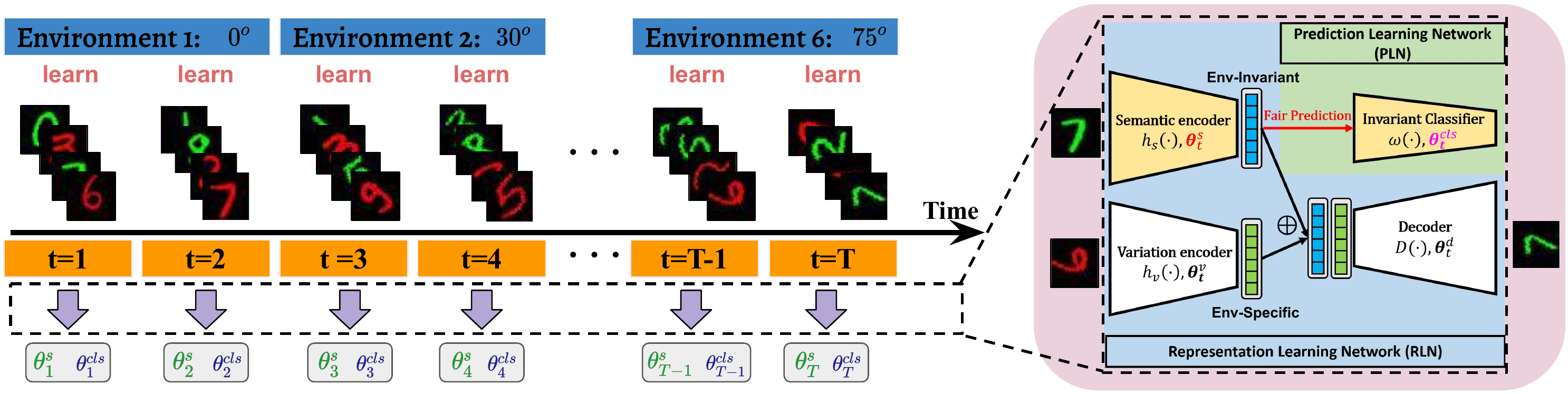}
    \vspace{-3mm}
    \caption{A graphical illustration of the proposed framework using Rotated-Colored-MNIST dataset. \textbf{(Left)} Each angle within $\{0, 15, 30, 45, 60, 75\}$ represents an environment. 
    In the problem of fairness-aware online learning for changing environments, data batches arrive one after another over time. Parameters sequence $\{\boldsymbol{\theta}_t^s,\boldsymbol{\theta}_t^{cls}\}_{t=1}^T$ are learned through the proposed model on the right.
    \textbf{(Right)} The model consists of two learning networks, RLN and PLN. The semantic and variation encoders disentangle an input with two factors (representations). Under \cref{assump:data-inv,assump:class-inv}, the decoder takes both factors and generates new data by diversifying the variation across environments. Semantic factors go through the classifier presented in PLN under fair constraints and further output fair predictions. We claim that when $T$ is large enough, only a subset of the parameters sequence, $\{\boldsymbol{\theta}_t^s\}_{t=1}^T$, are updated to adapt to changing environments.
    }
    \label{fig:RLN-PLN}
    \vspace{-3mm}
\end{figure*}
The overall protocol for this setting is as follows:
\begin{enumerate}[leftmargin=*]
    \item The learner selects semantic parameters $\boldsymbol{\theta}^s_t$ and classification parameters $\boldsymbol{\theta}^{cls}_t$ in the fair domain $\Theta$.
    \item The world reveals a loss and fairness notion $f_t$ and $g_t$.
    \item The learner incurs an instantaneous loss $f_t(h_s(\mathcal{D}_{t},\boldsymbol{\theta}_t^s),\boldsymbol{\theta}^{cls}_t)$ and fairness estimation $g(h_s(\mathcal{D}_{t},\boldsymbol{\theta}_t^s),\boldsymbol{\theta}^{cls}_t)$. 
    \item Advance to the next time.
\end{enumerate}
As mentioned in \cref{sec:online-learning}, the goal of the learner is to minimize regret under long-term constraints \cite{OGDLC-2012-JMLR}, defined as the summation of fair constraints over time. Since $\boldsymbol{\theta}^{s}_t$ adapts to different environments to encode semantic information from a latent invariant space and $\boldsymbol{\theta}^{cls}_t$ further takes semantic inputs for fair prediction, let $\{\boldsymbol{\theta}^{s}_t, \boldsymbol{\theta}^{cls}_t\}_{t=1}^T$ be the sequence of parameters generated at the \textit{Step} (1) of the protocol. We propose \textit{a novel fairness-aware regret for changing environments}, namely \sysnameregret{}, defined as 
\begin{equation}
\small
\begin{aligned}
\label{eq:our_regret}
    &\sysnameregret{} =\sum_{t=1}^T f_t(h_s(\mathcal{D}_{t},\boldsymbol{\theta}_t^s),\boldsymbol{\theta}^{cls}_t) -\min_{\boldsymbol{\theta}^{cls}\in\Theta} \sum_{t=1}^T f_t(h_s(\mathcal{D}_{t},\mathbf{u}_t^s),\boldsymbol{\theta}^{cls})\nonumber\\
    &\text{subject to} \quad \sum_{t=1}^T \Big|\Big|\big[g(h_s(\mathcal{D}_{t},\boldsymbol{\theta}_t^s),\boldsymbol{\theta}^{cls}_t)\big]_+\Big|\Big| = 0
\end{aligned}
\end{equation}
where $[\cdot]_+$ is the projection onto the non-negative space.
Similar to $\{\mathbf{u}_1,\cdots,\mathbf{u}_T\}$ denoted in \cref{eq:dynamic-regret}, $\{\mathbf{u}^s_1,\cdots,\mathbf{u}^s_T\}$ are a sequece of semantic comparators to $\{\boldsymbol{\theta}^s_1,\cdots,\boldsymbol{\theta}^s_T\}$, where each corresponds to an underlying environment. $\boldsymbol{\theta}^{cls}$ is the best-fixed comparator for fair classification, which is time-invariant.

\textbf{Remarks.} In contrast to the regret proposed in \cite{zhao-KDD-2022} in which it is extended from the interval-based strongly adaptive regret, and it aims to minimize the maximum static regret for all intervals on the undivided model parameter, \sysnameregret{} takes the mixed form of static and dynamic regrets. Furthermore, \cite{zhao-KDD-2022} employs the meta-learning framework in which the function inside $f_t$ is designed for interval-level learning with gradient steps on $\boldsymbol{\theta}$. However, in \sysnameregret{}, $h_s$ encodes an input to a semantic representation through a neural network on $\boldsymbol{\theta}^{s}$, which is part of $\boldsymbol{\theta}$.

\subsection{Assumptions for Invariance}
\label{sec:assumptions}
Recall that in the learning protocol mentioned in \cref{sec:settings and problem formulation}, the main goal for the learner is to generate the parameter sequence $\{\boldsymbol{\theta}^{s}_t, \boldsymbol{\theta}^{cls}_t\}_{t=1}^T$ in \cref{sec:settings and problem formulation} that performs well on the loss sequence and the long-term fair constraints. We make the following assumptions.
\begin{assumption}[Shared Semantic Space]
\label{assump:partially-shared-latent-spaces}
\begin{sloppypar}
Given a task $\{(\mathbf{x}_i, z_i, y_i, e_i)\}_{i=1}^{n}$ sampled from a particular environment $e_i\in\mathcal{E}$, we assume that each datapoint in the task is generated from 
\end{sloppypar}
\begin{itemize}[leftmargin=*]
    \item a semantic factor $\mathbf{s}_i=h_s(\mathbf{x}_i,\boldsymbol{\theta}^s)\in\mathcal{S}$, where $\mathcal{S}$ refers to a semantic space shared by all environment $\mathcal{E}$;
    \item a variation factor $\mathbf{v}_i=h_v(\mathbf{x}_i,\boldsymbol{\theta}^v)\in\mathcal{V}$ where $\mathbf{v}_i$ is specific to the individual environment $e_i$.
\end{itemize}
where $h_v:\mathcal{X}\times\Theta\rightarrow \mathcal{V}$ is a variation encoder parameterized by $\boldsymbol{\theta}^v$. We assume that each environment $e_i$ is represented by specific variation factor $h_v(\mathbf{x}_i,\boldsymbol{\theta}^v)$.  
\end{assumption}
This assumption is closely related to the shared latent space assumption in \cite{liu2017unsupervised}, wherein \cite{liu2017unsupervised} assumes a fully shared latent space. We postulate that only the semantic space can be shared across environments whereas the variation factor is environment specific, which is a more reasonable assumption when the cross-environment mapping is many-to-many. In other words, given datapoints in various environments, each can be encoded into semantic and variation factors within the same semantic space but with different variation factors depending on the environments. 

Under \cref{assump:partially-shared-latent-spaces}, each datapoint is able to be disentangled with semantic and variation factors. With two datapoints sampled from the same environment $e_i$, given a decoder $D:\mathcal{S}\times\mathcal{V}\times\Theta\rightarrow\mathcal{X}$, we assume that
\begin{assumption}[Data Invariance under Homogeneous Environments]
\label{assump:data-inv}
    Given a semantic encoder $h_s$, a variation encoder $h_v$, and a decoder $D$, for any $\mathbf{x}_i,\mathbf{x}_j\in\mathcal{X}, i\neq j$ sampled in the same environment $e\in\mathcal{E}$, it holds $\mathbf{x}_i=D(h_s(\mathbf{x}_i,\boldsymbol{\theta}^s), h_v(\mathbf{x}_j,\boldsymbol{\theta}^v), \boldsymbol{\theta}^d)$.
\end{assumption}
\cref{assump:data-inv} enforces the data invariance of the original input $\mathbf{x}_i$ and the one that $D(h_s(\mathbf{x}_i,\boldsymbol{\theta}^s), h_v(\mathbf{x}_j,\boldsymbol{\theta}^v), \boldsymbol{\theta}^d)$ reconstructs jointly from semantic and variation latent factors when the latter remains but the former varies. 

\begin{assumption}[Class Invariance under Heterogeneous Environments \cite{zhang2022towards}]
\label{assump:class-inv}
We assume that inter-environment variation is solely characterized by the environment shift in the distribution $\mathbb{P}(X, E)$. As a consequence, we assume that $\mathbb{P}(Y|X,E)$ is stable across environments. Similar to \cite{zhang2022towards,robey2021model}, given two datapoints $(\mathbf{x}_i, z_i, y, e_i)$ and $(\mathbf{x}_j, z_j, y, e_j)$, we assume the following holds
\begin{equation*}
\small
    \begin{aligned}
        \mathbb{P}(Y=y|X=\mathbf{x}_i,E=e_i) = &\mathbb{P}(Y=y|(X=D(h_s(\mathbf{x}_i,\boldsymbol{\theta}^s), h_v(\mathbf{x}_j,\boldsymbol{\theta}^v),\\
        &\boldsymbol{\theta}^d),E={e_j}), \\
        \forall \mathbf{x}_i,\mathbf{x}_j\in\mathcal{X}, e_i,e_j\in\mathcal{E}, i&\neq j
    \end{aligned}
\end{equation*}
\end{assumption}
This assumption shows that the prediction depends only on the semantic factor $h_s(\mathbf{x},\boldsymbol{\theta}^s)$ regardless of the variation factor $h_v(\mathbf{x},\boldsymbol{\theta}^v)$. Furthermore, the semantic factors are used for fair prediction under fairness constraints.

\subsection{Learning Dynamically for Adaptation}
\label{sec:learning-dynamic}
As the motivation stated in \cref{sec:preliminaries}, an efficient online algorithm is expected to partially update model parameters (\textit{i.e.,} $\boldsymbol{\theta}^s_t$) to adapt to changing environments sequentially and to remain the rest (\textit{i.e.,} $\boldsymbol{\theta}^{cls}_t$). 
As the illustration shown in \cref{fig:RLN-PLN}, a novel online framework for changing environments is proposed with two separate networks. The representation learning network (RLN) aims to learn a good semantic encoder $h_s$ that is able to accurately disentangle semantic representations within various environments, associated with the variation encoder $h_v$ and the decoder $D$. The prediction learning network (PLN) solely consists of the classifier $\omega$ and it takes semantic representations from RLN and outputs fair predictions under fair constraints, which is invariant to environments.

Specifically in RLN, to learn a good semantic encoder $h_s$, at each time $t$ we consider a data batch $\mathcal{Q}_t = \{(\mathbf{r}_{1,q,t}, \mathbf{r}_{2,q,t}, \mathbf{r}_{3,q,t},\mathbf{r}_{4,q,t})\}_{q=1,t}^{Q}$ containing multiple quartet data pairs sampled from existing task pool $\{\mathcal{D}_i\}_{i=1}^{t-1}$, where $Q$ denotes the number of quartet pairs in $|\mathcal{Q}_t|$.
\begin{itemize}
    \item $\mathbf{r}_{1,q,t}=(\mathbf{x}_{a,t},z_{a,t},y_t,e_t)$ with class $y_t$ and environment $e_t$
    \item $\mathbf{r}_{2,q,t}=(\mathbf{x}_{b,t},z_{b,t},y'_t,e_t)$ with class $y'_t$ and environment $e_t$
    \item $\mathbf{r}_{3,q,t}=(\mathbf{x}_{c,t},z_{c,t},y_t,e'_t)$ with class $y_t$ and environment $e'_t$
    \item $\mathbf{r}_{4,q,t}=(\mathbf{x}_{d,t},z_{d,t},y'_t,e'_t)$ with class $y'_t$ and environment $e'_t$
\end{itemize}
Notice that  $\mathbf{r}_{1,q,t}$ and $\mathbf{r}_{2,q,t}$ (same to $\mathbf{r}_{3,q,t}$ and $\mathbf{r}_{4,q,t}$) share the same environment label $e_t$ but different labels $y_t$ and $y'_t$. $\mathbf{r}_{1,q,t}$ and $\mathbf{r}_{3,q,t}$ (same to $\mathbf{r}_{2,q,t}$ and $\mathbf{r}_{4,q,t}$) share the same label $y_t$ but different environments $e_t$ and $e'_t$. We view $\mathbf{r}_{3,q,t}$ ($\mathbf{r}_{4,q,t}$) is an alternative pair to $\mathbf{r}_{1,q,t}$ ($\mathbf{r}_{2,q,t}$) with changing environments. For simplicity, we omit the subscripts $q$ and $t$.

Under \cref{assump:data-inv}, for $(\mathbf{r}_1,\mathbf{r}_2)$ and $(\mathbf{r}_3,\mathbf{r}_4)$ within the \textit{same environment but different labels}, the data reconstruction loss $\mathcal{L}_{recon}$ is given:
\begin{equation}
\small
\begin{aligned}
\label{eq:loss_recon}
    \mathcal{L}_{{recon}}^q= dist[\mathbf{x}_a,D(\mathbf{s}_a,\mathbf{v}_b,\boldsymbol{\theta}_t^d ) ] + dist[\mathbf{x}_c,D(\mathbf{s}_c,\mathbf{v}_d,\boldsymbol{\theta}_t^d ) ]
\end{aligned}
\end{equation}
where $\mathbf{s}_a=h_s(\mathbf{x}_a, \boldsymbol{\theta}_t^s)$, $\mathbf{s}_c=h_s(\mathbf{x}_c, \boldsymbol{\theta}_t^s)$, $\mathbf{v}_b=h_v(\mathbf{x}_b,\boldsymbol{\theta}_t^v)$, and $\mathbf{v}_d=h_v(\mathbf{x}_d,\boldsymbol{\theta}_t^v)$. $dist:\mathcal{X}\times\mathcal{X}\rightarrow\mathbb{R}$ indicates a distance metric, where we use $\ell_1$ norm in the experiments.

Similarly, under \cref{assump:class-inv}, for $(\mathbf{r}_1,\mathbf{r}_3)$ and $(\mathbf{r}_2,\mathbf{r}_4)$  with the \textit{same label but different environments}, the class invariance loss $\mathcal{L}_{inv}$ is given:
\begin{equation}
\small
\begin{aligned}
\label{eq:loss_inv}
    \mathcal{L}_{inv}^q=
    \ell_{CE}\Big(\omega\big(h_s(\mathbf{x}_{a\rightarrow c},\boldsymbol{\theta}^s_t),\boldsymbol{\theta}_t^{cls}\big),y\Big)
    +\ell_{CE}\Big(\omega\big(h_s(\mathbf{x}_{b\rightarrow d},\boldsymbol{\theta}^s_t),\boldsymbol{\theta}_t^{cls} \big),y'\Big)
\end{aligned}
\end{equation}
where $\mathbf{x}_{a\rightarrow c} = D(\mathbf{s}_a,\mathbf{v}_c,\boldsymbol{\theta}_t^d)$, $\mathbf{x}_{b\rightarrow d} = D(\mathbf{s}_b,\mathbf{v}_d,\boldsymbol{\theta}_t^d)$, and $\ell_{CE}:\mathbb{R}\times\mathbb{R}\rightarrow\mathbb{R}$ is the cross-entropy loss function.

Finally, to ensure prediction accuracy within a fair domain, we combine $(\mathbf{r}_{1,q}, \mathbf{r}_{2,q}, \mathbf{r}_{3,q}, \mathbf{r}_{4,q})$ together over the batch $\mathcal{Q}_t$ to estimate $\mathcal{L}_{cls}$ and $\mathcal{L}_{fair}$:
\begin{equation}
\small
\begin{aligned}
\label{eq:loss-cls-fair}
    \mathcal{L}_{cls}
    &=f_t(\mathcal{Q}_t, \boldsymbol{\theta}_t^s\oplus\boldsymbol{\theta}_t^{cls}) \\
    &=\sum\nolimits\nolimits_{q=1}^{Q}\sum\nolimits_{k}^{\{a,b,c,d\}}\ell_{CE}\big(\omega(h_s(\mathbf{x}_{k,q},\boldsymbol{\theta}_t^s),\boldsymbol{\theta}_t^{cls}),y_{k,q}\big)\\
    \mathcal{L}_{fair} &= \sum\nolimits_{q=1}^{{Q}} g(\mathcal{Q}_t, \boldsymbol{\theta}_t^s\oplus\boldsymbol{\theta}_t^{cls})
\end{aligned}
\end{equation}
where $\oplus$ denotes concatenation operator between $\boldsymbol{\theta}_t^{s}$ and $\boldsymbol{\theta}_t^{cls}$.

\begin{algorithm}[!t]
\caption{\sysname{}}
\label{alg:our-algor}
\begin{algorithmic}[1]
\State Input: batch size $Q$, learning rate $\eta_1, \eta_2$, margin $\epsilon_1,\epsilon_2, \epsilon_3$.
\State Randomly initialize $\boldsymbol{\theta}_{t=0}\in\Theta$ and $\lambda_{0,1},\lambda_{0,2},\lambda_{0,3}\in\mathbb{R}_+$
\State Initial the domain buffer as empty, $\mathcal{U}\leftarrow[\:]$.
\State Initial the task buffer as empty, $\mathcal{T}\leftarrow[\:]$.
\For{each $t\in[T]$}
    \State Record the performance of $(\boldsymbol{\theta}^s_{t-1},\boldsymbol{\theta}^{cls}_{t-1})$ on $\mathcal{D}_t$.
    \If{$e_t\notin \mathcal{U}$}
        \State $\mathcal{U}\leftarrow\mathcal{U}\cup\{e_t\}$
    \EndIf
    \State \multiline{%
        Assign $\boldsymbol{\theta}_t\leftarrow\boldsymbol{\theta}_{t-1},\:\lambda_{t,1}\leftarrow\lambda_{t-1,1},\:\lambda_{t,2}\leftarrow\lambda_{t-1,2},\:\lambda_{t,3}\leftarrow\lambda_{t-1,3}$}
    \For{$n=1,2\cdots$ steps}
        \If{$|\mathcal{U}|\neq 1$}
            \State \multiline{%
                Randomly sample a batch $\mathcal{Q}_t\subset\mathcal{T}$ indicated in \cref{sec:learning-dynamic}.}
            \State \multiline{%
                Compute $\mathcal{L}_{recon}^q$ and $\mathcal{L}_{inv}^q$ using \cref{eq:loss_recon,eq:loss_inv} for each quartet pair.}
            \State \multiline{%
                $\mathcal{L}_{recon} = \frac{1}{Q}\sum\nolimits_{q=1}^Q\mathcal{L}_{recon}^q$ and $\mathcal{L}_{inv} = \frac{1}{Q}\sum\nolimits_{q=1}^Q\mathcal{L}_{inv}^q$}
        \Else 
            \State \multiline{%
                Randomly sample a batch of doublet data pairs $\mathcal{Q}_t=\{((\mathbf{x}_{i,q,t},z_{i,q,t},y_{i,q,t},e), (\mathbf{x}_{j,q,t},z_{j,q,t},y_{j,q,t},e))\}_{q=1}^Q$, where $\mathcal{Q}_t\subset\mathcal{T}$.}
            \State \multiline{%
                Compute $\mathcal{L}_{recon}^q$ using \cref{eq:loss_recon} for each doublet pair.}
            \State $\mathcal{L}_{recon} = \frac{1}{Q}\sum\nolimits_{q=1}^Q\mathcal{L}_{recon}^q$
            \State Set $\mathcal{L}_{inv}=0$
        \EndIf
        \State Compute $\mathcal{L}_{cls}$, $\mathcal{L}_{fair}$ using \cref{eq:loss-cls-fair}.
        \State Compute $\mathcal{L}_{total}$ using \cref{eq:total-loss}.
        \State $\boldsymbol{\theta}_t^s\leftarrow\text{Adam}(\mathcal{L}_{total},\:\boldsymbol{\theta}_t^s,\:\eta_1)$
        \State $\boldsymbol{\theta}_t^v\leftarrow\text{Adam}(\lambda_{t,2}\cdot\mathcal{L}_{recon}+\lambda_{t,3}\cdot\mathcal{L}_{inv},\:\boldsymbol{\theta}_t^v,\:\eta_1)$
        \State $\boldsymbol{\theta}_t^d\leftarrow\text{Adam}(\lambda_{t,2}\cdot\mathcal{L}_{recon}+\lambda_{t,3}\cdot\mathcal{L}_{inv},\:\boldsymbol{\theta}_t^d,\:\eta_1)$
        \State \multiline{%
            $\boldsymbol{\theta}_t^{cls}\leftarrow\text{Adam}(\mathcal{L}_{cls}+\lambda_{t,1}\cdot\mathcal{L}_{fair}+\lambda_{t,3}\cdot\mathcal{L}_{inv},\:\boldsymbol{\theta}_t^{cls},\:\eta_1)$}
        \State $\lambda_{t,1}\leftarrow\max\big\{\lambda_{t,1}+\eta_2\cdot(\mathcal{L}_{fair}-\epsilon_1),\:0\big\}$
        \State $\lambda_{t,2}\leftarrow\max\big\{\lambda_{t,2}+\eta_2\cdot(\mathcal{L}_{recon}-\epsilon_2),\:0\big\}$
        \If{$|\mathcal{U}|\neq 1$}
            \State $\lambda_{t,3}\leftarrow\max\big\{\lambda_{t,3}+\eta_2\cdot(\mathcal{L}_{inv}-\epsilon_3),\:0\big\}$
        \EndIf
    \EndFor
    \State $\mathcal{T}\leftarrow\mathcal{T}\cup\{\mathcal{D}_t\}$
\EndFor
\end{algorithmic}
\end{algorithm}
\setlength{\textfloatsep}{3pt}

\subsection{A Practical Online Algorithm: \sysname{}}
In practice, requirements for remaining data invariance for datapoints sampled in the same environment with different labels and for keeping class invariance for datapoints sampled with the same label within various environments are hard to be satisfied. Similar to the fairness constraint, it is a strict equality constraint that is difficult to enforce in practice. To alleviate some of such difficulties, we relax the loss functions with empirical constants that
\begin{equation}
\small
\begin{aligned}
    &\mathcal{L}_{fair} \leq \epsilon_1\\
    &\mathcal{L}_{recon} = \frac{1}{Q}\sum\nolimits_{q=1}^Q\mathcal{L}_{recon}^q \leq \epsilon_2; \quad
    \mathcal{L}_{inv} = \frac{1}{Q}\sum\nolimits_{q=1}^Q\mathcal{L}_{inv}^q \leq \epsilon_3
\end{aligned}
\end{equation}
$\epsilon_1,\epsilon_2,\epsilon_3>0$ are fixed margins that control the extent to violations. 
\begin{equation}
    \small
\begin{aligned}
\label{eq:total-loss}
    \mathcal{L}_{total} = \mathcal{L}_{cls}+\lambda_{t,1}(\mathcal{L}_{fair}-\epsilon_1)+\lambda_{t,2}(\mathcal{L}_{recon}-\epsilon_2)+\lambda_{t,3}(\mathcal{L}_{inv}-\epsilon_3)
\end{aligned}
\end{equation}
\begin{sloppypar}
Furthermore, we propose a primal-dual \cref{alg:our-algor} for efficient optimization, wherein it alternates between optimizing $\boldsymbol{\theta}_t=\{\boldsymbol{\theta}_t^s, \boldsymbol{\theta}_t^v,\boldsymbol{\theta}_t^d,\boldsymbol{\theta}_t^{cls}\}$ at each time via minimizing the empirical Lagrangian with fixed dual $\lambda_t=\{\lambda_{t,1},\lambda_{t,2},\lambda_{t,3}\}$ corresponding for $\mathcal{L}_{fair}$, $\mathcal{L}_{recon}$ as well as $\mathcal{L}_{inv}$ and updating the dual variable according to the minimizer (lines 24-32). The primal-dual iteration has clear advantages over stochastic gradient descent in solving constrained optimization problems. Specifically, it avoids introducing extra balancing hyperparameters. Moreover, it provides convergence guarantees once we have sufficient iterations and a sufficiently small step size \cite{zhang2022towards}.
\end{sloppypar}
Moreover, because each task corresponds to a timestamp $t$ and an unknown environment before $\mathcal{D}_t$ arrives, the collected task pool $\{\mathcal{D}_i\}_{i=1}^{t-1}$ may be sampled from a single environment. In this sense, instead of using a batch stated in \cref{sec:learning-dynamic} with multiple quartet pairs, a sampled batch with doublet pairs containing $\{(\mathbf{r}_{1,q,t},\mathbf{r}_{1,q,t})\}_{q=1,t}^Q$ is considered. As a consequence, the class invariance loss in \cref{eq:loss_inv} is set to zero (lines 17-20).

\section{Analysis}
\label{sec:analysis}
    We first state assumptions about the online learning problem for changing environments that are largely used in \cite{zhang-nips-2018,Finn-ICML-2019,zhao-KDD-2022,zhang-2020-AISTATS}. Then we provide theoretical guarantees for the proposed \sysnameregret{} regarding the loss regret and violation of cumulative fair constraints.

\begin{assumption}[Bounded Parameter Domain]
\label{assump:bounded-parameter-domain}
The parameter domain $\Theta$ has a bounded diameter $D$ and contains the origin.
\begin{equation*}
    \small
\begin{aligned}
    \max_{\boldsymbol{\theta}_1,\boldsymbol{\theta}_2\in\Theta}||\boldsymbol{\theta}_1-\boldsymbol{\theta}_2||\leq D,\quad \forall \boldsymbol{\theta}_1,\boldsymbol{\theta}_2\in\Theta
\end{aligned}
\end{equation*}
\end{assumption}

\begin{assumption}[Convexity]
\label{assump:convexity}
Domain $\Theta$ is convex and closed. The loss function $f_t$ and the fair function $g$ are convex.
\end{assumption}

\begin{assumption}[$F-$Lipschitz]
\label{assump:F-lipschitz}
There exists a positive constant $F$ such that 
\begin{equation*}
    \small
\begin{aligned}
    &\max_{\boldsymbol{\theta}_1,\boldsymbol{\theta}_2\in\Theta}|f_t(\cdot,\boldsymbol{\theta}_1)-f_t(\cdot,\boldsymbol{\theta}_2)|\leq F,\\
    &\max_{\boldsymbol{\theta}_1\in\Theta}||g(\cdot,\boldsymbol{\theta}_1)||\leq F, \quad \forall \boldsymbol{\theta}_1,\boldsymbol{\theta}_2\in\Theta, \forall t\in[T]
\end{aligned}
\end{equation*}
\end{assumption}

\begin{assumption}[Bounded gradient]
\label{assump:bounded-gradient}
\begin{sloppypar}
The gradients $\nabla f_t(\boldsymbol{\theta})$ and $\nabla g(\boldsymbol{\theta})$ exist, and 
they are bounded by a positive constant $G$ on $\Theta$, \textit{i.e.,}
\end{sloppypar}
\begin{equation*}
    \small
\begin{aligned}
    \max_{\boldsymbol{\theta}\in\Theta}||\nabla f_t(\cdot,\boldsymbol{\theta})||\leq G, \quad \max_{\boldsymbol{\theta}\in\Theta}||\nabla g(\cdot,\boldsymbol{\theta})||\leq G, \quad \forall \boldsymbol{\theta}\in\Theta,\forall t\in[T]
\end{aligned}    
\end{equation*}
\end{assumption}

Examples where these assumptions hold include logistic regression and $L2$ regression over a bounded domain. As for constraints, a family of fairness notions, such as DDP stated in \cref{def:fairness-notion} of \cref{sec:group-fairness}, are applicable as discussed in \cite{Lohaus-2020-ICML}. For simplicity, in this section, we omit $\mathcal{D}$ used in $f_t,\forall t$ and $g$.

\begin{sloppypar}
As introduced in \cref{sec:settings and problem formulation}, a sequence of parameters $\{\boldsymbol{\theta}_1^s,\cdots,\boldsymbol{\theta}_T^s,$ $\boldsymbol{\theta}_1^{cls},\cdots,\boldsymbol{\theta}_T^{cls}\}$ generated by the learner are evaluated with comparator sequence $\{\mathbf{u}_1^s,\cdots,\mathbf{u}_T^s,\boldsymbol{\theta}^{cls}\}$ in \sysnameregret{}. We claim that \sysnameregret{} takes a mixed form of the static and dynamic regrets with respect to $\{\boldsymbol{\theta}_t^{cls}\}_{t=1}^T$ and $\{\boldsymbol{\theta}_t^{s}\}_{t=1}^T$, respectively. Since the comparator $\boldsymbol{\theta}^{cls}$ in the static regret is performed as the best fixed one in hindsight, intuitively the comparator sequence can be extended to $\{\mathbf{u}_1^s,\cdots,\mathbf{u}_T^s,\boldsymbol{\theta}^{cls},\cdots,\boldsymbol{\theta}^{cls}\}$ by making $T$ copies of $\boldsymbol{\theta}^{cls}$. For simplicity, we denote the sequence of the learner's parameters and comparators as $\{\boldsymbol{\theta}_t^l\}_{t=1}^T$ and $\{\mathbf{u}_t^c\}_{t=1}^T$, respectively, where $\boldsymbol{\theta}_t^l:=\boldsymbol{\theta}_t^s\oplus\boldsymbol{\theta}_t^{cls}$ and $\mathbf{u}_t^c:=\mathbf{u}_t^s\oplus\boldsymbol{\theta}^{cls}$, $\forall t\in[T]$.
\end{sloppypar}

Furthermore, different from the static regret introduced in \cref{eq:static-regret}, it is impossible to achieve a sub-linear upper bound using dynamic regret in general. Instead, we can bound the dynamic regret in terms of some certain regularity of the comparator sequence or the function sequence, such as the path-length \cite{Zinkevich-ICML-2003} which measures the temporal variability of the comparator sequence.
\begin{equation}
    \small
\begin{aligned}
\label{eq:path-length}
    P_T = \sum\nolimits_{t=1}^T || \mathbf{u}_{t+1}^c-\mathbf{u}_t^c||_2
\end{aligned}
\end{equation}
Finally, under \cref{assump:bounded-gradient,assump:bounded-parameter-domain,assump:F-lipschitz,assump:convexity} and \cref{eq:path-length}, we state the key \cref{theorem:regret-bound} that the proposed \sysnameregret{} enjoys theoretic guarantees for both loss regret and violation of the long-term fairness constraint in the long run for \cref{alg:our-algor}.

\begin{table*}[!t]
\caption{Comparison of upper bounds in loss regret and constraint violations for changing environments across methods.}
\vspace{-3mm}
\setlength\tabcolsep{3pt}
\begin{tabular}{cccccc|c}
    \hline
    \multicolumn{1}{c|}{Algorithms}  & \textit{M. Zinkevich} \cite{Zinkevich-ICML-2003} & Ader \cite{zhang-nips-2018} & \multicolumn{1}{c}{AOD \cite{zhang-2020-AISTATS}} & \multicolumn{1}{c}{CBCE\cite{Jun-2017-AISTATS}} & FairSAOML \cite{zhao-KDD-2022} & \sysnameregret{} (Ours)\\ 
    \hline
    \multicolumn{1}{c|}{Loss Regret} & $\mathcal{O}(T^{1/2}(1+P_T))$ & $\mathcal{O}((T(1+P_T))^{1/2})$ & \multicolumn{1}{c}{$O\big((\tau\log T)^{1/2}\big)$} & \multicolumn{1}{c}{$O\big((\tau\log T)^{1/2}\big)$} & $O\big((\tau\log T)^{1/2}\big)$ & $\mathcal{O}((T(1+P_T))^{1/2})$\\ 
    \hline
    \multicolumn{1}{c|}{\begin{tabular}[c]{@{}c@{}} Constraint Violations\end{tabular}} & - & - & \multicolumn{1}{c}{-} & \multicolumn{1}{c}{-} & $O\big((\tau T\log T)^{1/4}\big)$ & $O(T^{1/2})$\\ 
    \hline
\end{tabular}
\label{tab:analysis-comparison}
\vspace{-4mm}
\end{table*}

\begin{theorem}
\label{theorem:regret-bound}
Suppose \cref{assump:bounded-gradient,assump:bounded-parameter-domain,assump:F-lipschitz,assump:convexity} hold, let $\{\boldsymbol{\theta}_t^s,\boldsymbol{\theta}_t^{cls}\}_{t=1}^T$ be the sequence generated by the online learner in \cref{alg:our-algor} and $\{\mathbf{u}_t^s\}_{t=1}^T\cup \{\boldsymbol{\theta}^{cls}\}$ be the comparator sequence, 
setting adaptive learning rates with
\begin{equation*}
    \small
\begin{aligned}
    \eta_{1,t} = \eta_{1,0}/\sqrt{T},\quad \eta_{2,t}=\eta_{2,0}/\sqrt{\eta_{1,t}}, \forall t\in[T]
\end{aligned}
\end{equation*}
where $\eta_{1,0}>0$ and $\eta_{2,0}\in(0, \frac{1}{\sqrt{2}G})$ are constants. We have
\begin{equation*}
    \small
\begin{aligned}
    &\sum_{t=1}^T f_t(h_s(\boldsymbol{\theta}_t^s),\boldsymbol{\theta}^{cls}_t) -\min_{\boldsymbol{\theta}^{cls}\in\Theta} \sum_{t=1}^T f_t(h_s(\mathbf{u}_t^s),\boldsymbol{\theta}^{cls})=\mathcal{O}\Big(\sqrt{T(1+P_T)}\Big)\\
    &\sum_{t=1}^T \Big|\Big|\big[g(h_s(\boldsymbol{\theta}_t^s),\boldsymbol{\theta}^{cls}_t)\big]_+\Big|\Big| = \mathcal{O}(\sqrt{T})
\end{aligned}
\end{equation*}
\end{theorem}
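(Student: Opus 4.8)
I would reduce \sysnameregret{} to the dynamic regret of a single constrained online convex program and then run a primal--dual analysis. Following the excerpt, concatenate the learner's iterates into $\boldsymbol{\theta}_t^l:=\boldsymbol{\theta}_t^s\oplus\boldsymbol{\theta}_t^{cls}$ and the comparators into $\mathbf{u}_t^c:=\mathbf{u}_t^s\oplus\boldsymbol{\theta}^{cls}$, and work with the composite loss $\tilde f_t(\boldsymbol{\theta}^l):=f_t(h_s(\boldsymbol{\theta}^s),\boldsymbol{\theta}^{cls})$ and composite constraint $\tilde g_t(\boldsymbol{\theta}^l):=g(h_s(\boldsymbol{\theta}^s),\boldsymbol{\theta}^{cls})$. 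By \cref{assump:convexity,assump:F-lipschitz,assump:bounded-gradient,assump:bounded-parameter-domain} these are convex, bounded by $F$, have $G$-bounded gradients, and live on the diameter-$D$ convex set $\Theta$. The first thing to observe is that \cref{alg:our-algor}, restricted to the blocks that enter the regret, is a projected primal-gradient step of size $\eta_{1,t}$ on the instantaneous Lagrangian $\mathcal{L}_t(\boldsymbol{\theta},\lambda)=\tilde f_t(\boldsymbol{\theta})+\lambda[\tilde g_t(\boldsymbol{\theta})]_+$ together with a projected dual-ascent step of size $\eta_{2,t}$ on the fairness multiplier; the auxiliary multipliers and slacks $\epsilon_2,\epsilon_3$ (which only shape how $\boldsymbol{\theta}_t^s$ is produced) are treated as a black box, and the fairness slack $\epsilon_1$ merely relaxes the constraint, contributing an $\mathcal{O}(T\epsilon_1)$ remainder that vanishes at the prescribed rates.

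\textbf{Loss regret.} Next I would write the one-step inequality: non-expansiveness of the projection on the primal step, convexity of $\tilde f_t,\tilde g_t$, and feasibility of the comparator ($[\tilde g_t(\mathbf{u}_t^c)]_+=0$) give
\begin{equation*}
\tilde f_t(\boldsymbol{\theta}_t^l)-\tilde f_t(\mathbf{u}_t^c)+\lambda_t[\tilde g_t(\boldsymbol{\theta}_t^l)]_+\le\frac{\|\boldsymbol{\theta}_t^l-\mathbf{u}_t^c\|^2-\|\boldsymbol{\theta}_{t+1}^l-\mathbf{u}_t^c\|^2}{2\eta_{1,t}}+\frac{\eta_{1,t}}{2}\bigl\|\nabla_{\boldsymbol{\theta}}\mathcal{L}_t\bigr\|^2,
\end{equation*}
with $\|\nabla_{\boldsymbol{\theta}}\mathcal{L}_t\|^2\le 2G^2(1+\lambda_t^2)$. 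Summing over $t$ (with $\eta_{1,t}\equiv\eta_1$) and splitting the telescoped quantity through the moving-comparator identity of \cite{Zinkevich-ICML-2003}, $\sum_t(\|\boldsymbol{\theta}_t^l-\mathbf{u}_t^c\|^2-\|\boldsymbol{\theta}_{t+1}^l-\mathbf{u}_t^c\|^2)\le D^2+2D\sum_t\|\mathbf{u}_{t+1}^c-\mathbf{u}_t^c\|_2=D^2+2DP_T$, and discarding the nonnegative $\sum_t\lambda_t[\tilde g_t(\boldsymbol{\theta}_t^l)]_+$, I would arrive at
\begin{equation*}
\sum_{t=1}^T\bigl(\tilde f_t(\boldsymbol{\theta}_t^l)-\tilde f_t(\mathbf{u}_t^c)\bigr)\le\frac{D^2+2DP_T}{2\eta_1}+\eta_1 G^2\sum_{t=1}^T\bigl(1+\lambda_t^2\bigr).
\end{equation*}
The \emph{mixed} static/dynamic shape is then automatic: since the comparator equals the fixed minimizer $\boldsymbol{\theta}^{cls}$ on the classification block, that block contributes nothing to $P_T$ and behaves as a static coordinate, whereas $P_T$ reduces to $\sum_t\|\mathbf{u}_{t+1}^s-\mathbf{u}_t^s\|_2$ on the semantic block. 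Given $\sum_t\lambda_t^2=\mathcal{O}(T)$, taking $\eta_1$ of order $\sqrt{(1+P_T)/T}$ (folded into $\eta_{1,0}$) balances the two terms and yields the $\mathcal{O}(\sqrt{T(1+P_T)})$ bound.

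\textbf{Constraint violation.} For the dual side I would square the recursion $\lambda_{t+1}\le\lambda_t+\eta_2([\tilde g_t(\boldsymbol{\theta}_t^l)]_+-\epsilon_1)$ and telescope to bound $\lambda_{T+1}^2$ by $\lambda_1^2+2\eta_2\sum_t\lambda_t[\tilde g_t(\boldsymbol{\theta}_t^l)]_++\mathcal{O}(T\eta_2^2)$; plugging in the primal inequality (which upper-bounds $\sum_t\lambda_t[\tilde g_t(\boldsymbol{\theta}_t^l)]_+$ by the loss-regret bound $+\,FT+\eta_1 G^2\sum_t\lambda_t^2$) gives a self-referential inequality for $\max_t\lambda_t^2$ which, at the stated step sizes, is closed to obtain a uniform bound $\lambda_t\le\lambda_{\max}$ and hence the $\sum_t\lambda_t^2=\mathcal{O}(T)$ used above. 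Finally, telescoping the linear recursion $\lambda_{t+1}\ge\lambda_t+\eta_2([\tilde g_t(\boldsymbol{\theta}_t^l)]_+-\epsilon_1)$ gives $\sum_t[\tilde g_t(\boldsymbol{\theta}_t^l)]_+\le\lambda_{T+1}/\eta_2+T\epsilon_1$; with $\eta_{2,t}=\eta_{2,0}/\sqrt{\eta_{1,t}}$ of order $T^{1/4}$, $\lambda_{\max}/\eta_2=\mathcal{O}(\sqrt T)$, and $\epsilon_1$ taken $\mathcal{O}(T^{-1/2})$, the right side is $\mathcal{O}(\sqrt T)$; since the constraint is scalar-valued, $\sum_t\|[\tilde g_t(\boldsymbol{\theta}_t^l)]_+\|=\sum_t[\tilde g_t(\boldsymbol{\theta}_t^l)]_+$, which is the second claim.

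\textbf{Where the difficulty lies.} The routine parts are the one-step projection inequality and the path-length telescoping. The genuine obstacle is the coupled control of the dual iterates: the loss-regret bound carries $\eta_1 G^2\sum_t\lambda_t^2$ while the dual drift carries $\eta_2\sum_t\lambda_t[\tilde g_t]_+$, and the naive estimate $\lambda_{t+1}\le\lambda_t+\eta_2 F$ is far too lossy to reach $\sum_t\lambda_t^2=\mathcal{O}(T)$ at these rates. Making this rigorous needs either a Slater-type strictly-feasible point together with a drift-plus-penalty (virtual-queue) argument bounding $\lambda_t$ (in the spirit of \cite{Yu-2017-NIPS,OGDLC-2012-JMLR}) or the clipped-constraint refinement of \cite{GenOLC-2018-NeurIPS}; this step is also the one most sensitive to the exact form of \cref{alg:our-algor} (the margins $\epsilon_i$ and the several inner gradient steps per round), so I would concentrate most of the effort there.
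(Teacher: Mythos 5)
Your proposal follows the same primal--dual template as the paper's own (very compressed) proof: both reduce \sysnameregret{} to constrained online gradient descent on the concatenated iterate $\boldsymbol{\theta}_t^l=\boldsymbol{\theta}_t^s\oplus\boldsymbol{\theta}_t^{cls}$, both rest on a one-step projection/convexity inequality plus a bound on the dual variable, and both discard the nonnegative term $\sum_t\lambda_t[\tilde g_t(\boldsymbol{\theta}_t^l)]_+$ to isolate the loss regret. The difference is in what is made explicit. The paper telescopes the quantities $\|\mathbf{u}_T^c-\boldsymbol{\theta}_t^l\|^2-\|\mathbf{u}_T^c-\boldsymbol{\theta}_{t+1}^l\|^2$ against the \emph{fixed} endpoint comparator $\mathbf{u}_T^c$ and outsources both the one-step inequality and the dual bound $\|\lambda_{T,1}\|^2\le\beta T$ to Lemma~1 of \cite{yi2021regret}; as written, its chain of inequalities never actually exhibits the path length $P_T$, so the stated $\mathcal{O}(\sqrt{T(1+P_T)})$ is asserted rather than derived. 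Your moving-comparator telescoping $\sum_t(\|\boldsymbol{\theta}_t^l-\mathbf{u}_t^c\|^2-\|\boldsymbol{\theta}_{t+1}^l-\mathbf{u}_t^c\|^2)\le D^2+2DP_T$ is the standard identity of \cite{Zinkevich-ICML-2003} and is the more transparent route: it shows exactly where $P_T$ enters and why the $\boldsymbol{\theta}^{cls}$ block contributes only the static $D^2$ part. Likewise, your identification of $\sum_t\lambda_t^2=\mathcal{O}(T)$ as the crux, to be closed by a self-referential drift inequality on the dual recursion, is precisely the content of the lemma the paper imports; spelling it out is the right instinct, since that is where the hypothesis $\eta_{2,0}\in(0,\tfrac{1}{\sqrt{2}G})$ is consumed.

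One caveat applies to both your argument and the paper's. With the prescribed $\eta_{1,t}=\eta_{1,0}/\sqrt{T}$ and $\eta_{1,0}$ a genuine constant, your final display gives
\begin{equation*}
\sum_{t=1}^T\bigl(\tilde f_t(\boldsymbol{\theta}_t^l)-\tilde f_t(\mathbf{u}_t^c)\bigr)\le\frac{(D^2+2DP_T)\sqrt{T}}{2\eta_{1,0}}+\mathcal{O}(\sqrt{T})=\mathcal{O}\bigl(\sqrt{T}\,(1+P_T)\bigr),
\end{equation*}
which is the Zinkevich-type rate, not the claimed $\mathcal{O}(\sqrt{T(1+P_T)})$. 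Reaching the latter requires $\eta_{1,0}\propto\sqrt{1+P_T}$, i.e.\ prior knowledge of the path length, or an Ader-style aggregation over a grid of step sizes \cite{zhang-nips-2018}; you flag this (``folded into $\eta_{1,0}$''), but it should be stated as an explicit hypothesis, and the paper's sketch does not address it at all. Modulo that shared gap, your plan is sound and, if anything, more complete than the proof it is meant to reproduce.
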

\begin{proof}
Proof of \cref{theorem:regret-bound} is given in \cref{app:proof}.
\end{proof}

\textbf{Discussion.} 
Under \cref{assump:bounded-gradient,assump:bounded-parameter-domain,assump:F-lipschitz,assump:convexity}, we provide comparable bounds for \sysnameregret{} with respect to both loss regret and violation of fair constraints. \cref{tab:analysis-comparison} lists a number of state-of-the-art works focusing on the problem of online learning in changing environments, where ours are added at the end. AOD \cite{zhang-2020-AISTATS}, CBCE \cite{Jun-2017-AISTATS}, and FairSAOML \cite{zhao-KDD-2022} address this problem by proposing strongly adaptive regret. In contrast to dynamic regret, strongly adaptive regret handles changing environments from a local perspective by proposing a set of intervals ranging from $\tau$ tasks. Ader \cite{zhang-nips-2018} and \textit{M. Zinkevich} \cite{Zinkevich-ICML-2003} tackle this problem using dynamic regret using the length-path regularity in \cref{eq:path-length}. Although the loss regret we derived for \sysnameregret{} is comparable to the one in Ader, the latter ignores the long-term fair constraint which is essential for fair online learning.

\section{Experimental Settings}
\label{sec:experiments}
    \begin{figure*}[!t]
    \centering
    \includegraphics[width=0.8\linewidth]{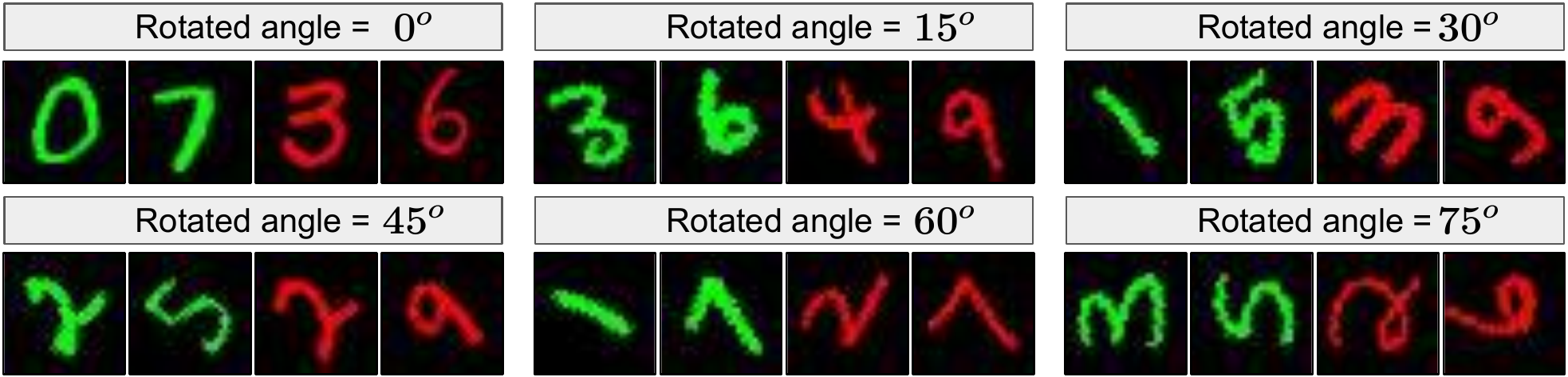}
    \vspace{-3mm}
    \caption{Visualization of the Rotated-Colored-MNIST dataset. }
    \vspace{-3mm}
    \label{fig:rcMNIST-vis}
\end{figure*}

In previous sections, we derive a theoretically principled algorithm assuming convexity everywhere. However, it has been known that deep learning models provide advanced performance in real-world applications, but they have a non-convex landscape with challenging theoretical analysis. Taking inspiration from the success of deep learning, we empirically evaluate the proposed algorithm \sysname{} using neural networks in this section.

\textbf{Datasets.}
We consider four datasets: Rotated-Colored-MNIST (rcMNIST), New York Stop-and-Frisk \cite{Koh-icml-2021}, Chicago Crime \cite{Zhao-ICDM-2019}, and German Credit \cite{german-data} to evaluate our \sysname{} against state-of-the-art baselines, where rcMNIST is an image data and the other three are tabular datasets. We include the visualization of rcMNIST in \cref{fig:rcMNIST-vis}. 
\textbf{(1) Rotated-Colored-MNIST} is extended from the Rotated-MNIST dataset \cite{ghifary2015domain}, which consists of 10,000 digits from 0 to 9 with different rotated angles where environments are determined by angles $\{0, 15, 30, 45, 60, 75\}$. For simplicity, we consider binary classification where digits are labeled with 0 and 1 for digits from 0-4 and 5-9, respectively. For fairness concerns, each image has a green or red digit color as the sensitive attribute. We intentionally make correlations between labels and digit colors for each corresponding environment ranging from $\{0.9, 0.7, 0.5, 0.3, 0.1, 0.05\}$. We further divide data from each environment equally into 3 subsets, where each is considered a task. For 6 environments, there is are total of 18 tasks and each arrives one after another over time in order.
\textbf{(2) New York Stop-and-Frisk} \cite{Koh-icml-2021} is a real-world dataset on policing in New York City in 2011.
It documents whether a pedestrian who was stopped on suspicion of weapon possession would in fact possess a weapon. We consider race (\textit{i.e.,} black and non-black) as the sensitive label for each datapoint. Since this data consists of data from 5 cities in New York City, Manhattan, Brooklyn, Queens, Bronx, and Staten, data collected from each city is considered as an individual environment. To adapt to the setting of online learning, data in each environment is further split into 3 tasks, 15 tasks in total, where each task corresponds to a month's set of data of a city. 
\textbf{(3) Chicago Crime} \cite{Zhao-ICDM-2019} dataset contains information including demographics information (\textit{e.g.,} race, gender, age, population, \textit{etc.}), household, education, unemployment status, \textit{etc}. We use race (\textit{i.e.,} black and non-black) as the sensitive label. It consists of 16 tasks and each corresponds to a county of Chicago city as an environment. 
This dataset is initially used for multi-task fair regression learning in \cite{Zhao-ICDM-2019}, where crime counts are used as continuous labels for data records. In our experiments, we categorize crime counts into binary labels, high ($\geq 6$) and low ($<6$).
\textbf{(4) German Credit} \cite{german-data} dataset contains 1000 datapoints with 20 features. Gender (\textit{i.e.,} male and female) is used as sensitive attribute and credit risk (\textit{i.e.,} good and bad) is the target. Following \cite{zhao-KDD-2022, Wan_2021_AAAI}, to generate dynamic environments, we construct a larger dataset by combining three copies of the original data and flipping the original values of non-sensitive attributes by multiplying -1 for the middle copy. Therefore, each copy is considered as an environment. Each data copy is split into 2 tasks by time and there are 6 tasks in total.

\textbf{Evaluation Metrics.}
Three popular evaluation metrics to estimate fairness are used and each allows quantifying the extent to model bias.
\begin{itemize}[leftmargin=*]
    \item \textit{Demographic Parity} (DP) \cite{Dwork-2011-CoRR} is formalized as
        \[ \text{DP} =
            \begin{cases} 
                \mathbb{P}(\hat{Y}=1|Z=-1) \Big/ \mathbb{P}(\hat{Y}=1|Z=1), & \text{if DP}\leq 1 \\
                \mathbb{P}(\hat{Y}=1|Z=1) \Big/ \mathbb{P}(\hat{Y}=1|Z=-1), & \text{otherwise} 
            \end{cases}
        \]
        This is also known as a lack of disparate impact \cite{Feldman-KDD-2015}. A value closer to 1 indicates fairness.
    \item \textit{Equalized Odds} (EO) \cite{Hardt-NIPS-2016} is formalized as
            \[ \text{EO} =
            \begin{cases} 
                \mathbb{P}(\hat{Y}=1|Z=-1, Y=1) \Big/ \mathbb{P}(\hat{Y}=1|Z=1, Y=1), & \text{if EO}\leq 1 \\
                \mathbb{P}(\hat{Y}=1|Z=1, Y=1) \Big/ \mathbb{P}(\hat{Y}=1|Z=-1, Y=1), & \text{otherwise} 
            \end{cases}
        \]
        EO requires that $\hat{Y}$ has equal true positive and false negative rates between subgroups $z=-1$ and $z=1$. Same to DP, a value closer to 1 indicates fairness.
    \item \textit{Mean Difference} (MD) \cite{Zemel-ICML-2013} is a form of statistical parity, applied to the classification decisions, measuring the difference in the proportion of positive class of individuals in sub-groups.
        \begin{align*}
            \text{MD} = \Big|\frac{\sum_{i:z_i=1}\hat{y}_i}{\sum_{i:z_i=1}1}-\frac{\sum_{i:z_i=-1}\hat{y}_i}{\sum_{i:z_i=-1}1}\Big|
        \end{align*}
        A value closer to 0 indicates fairness.
\end{itemize}

\begin{figure*}[!t]
\captionsetup[subfigure]{aboveskip=-1pt,belowskip=-1pt}
\centering
    \begin{subfigure}[b]{0.245\textwidth}
        \includegraphics[width=\textwidth]{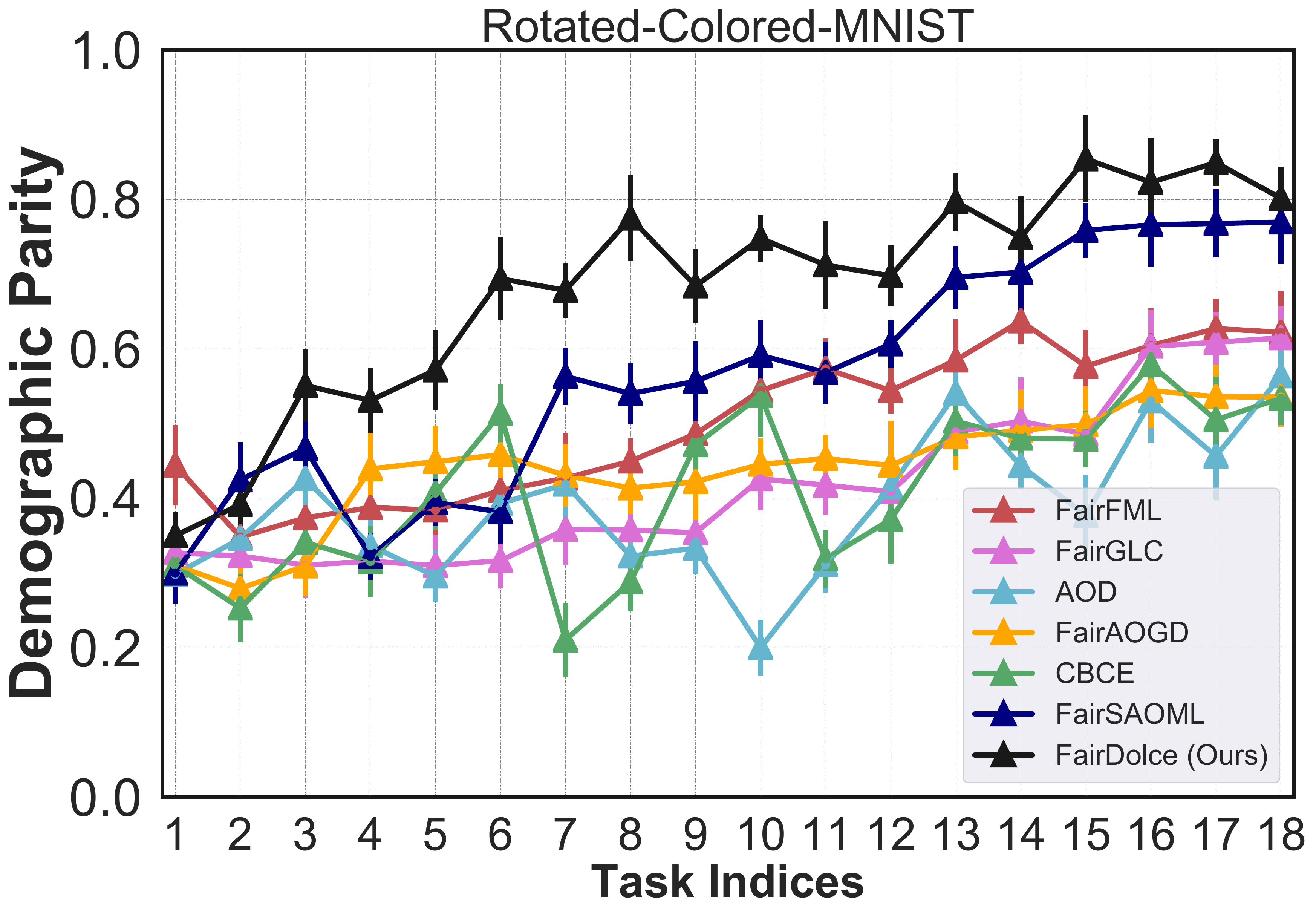}
        \caption{}
    \end{subfigure}
    \begin{subfigure}[b]{0.245\textwidth}
        \includegraphics[width=\textwidth]{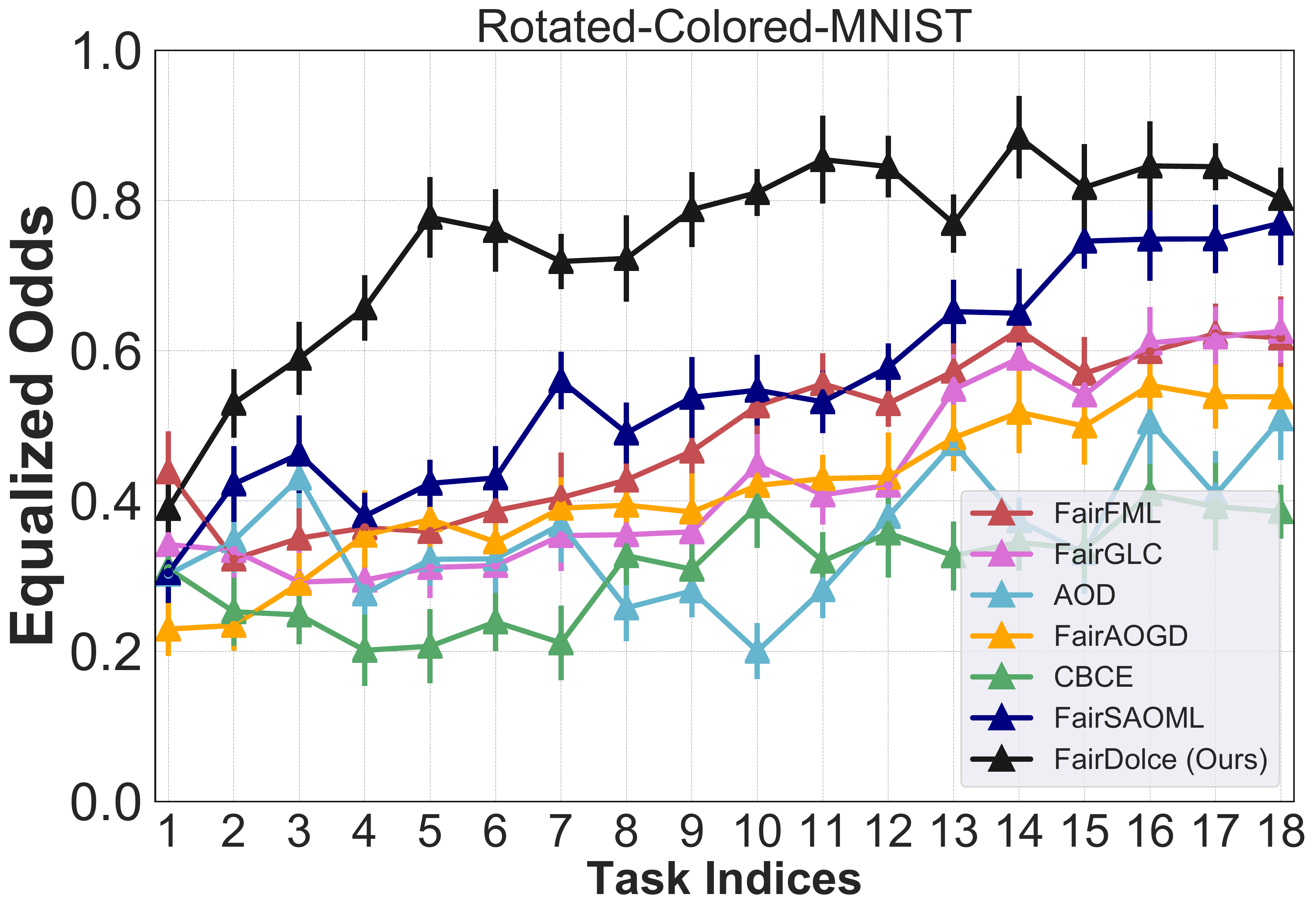}
        \caption{}
    \end{subfigure}
    \begin{subfigure}[b]{0.245\textwidth}
        \includegraphics[width=\textwidth]{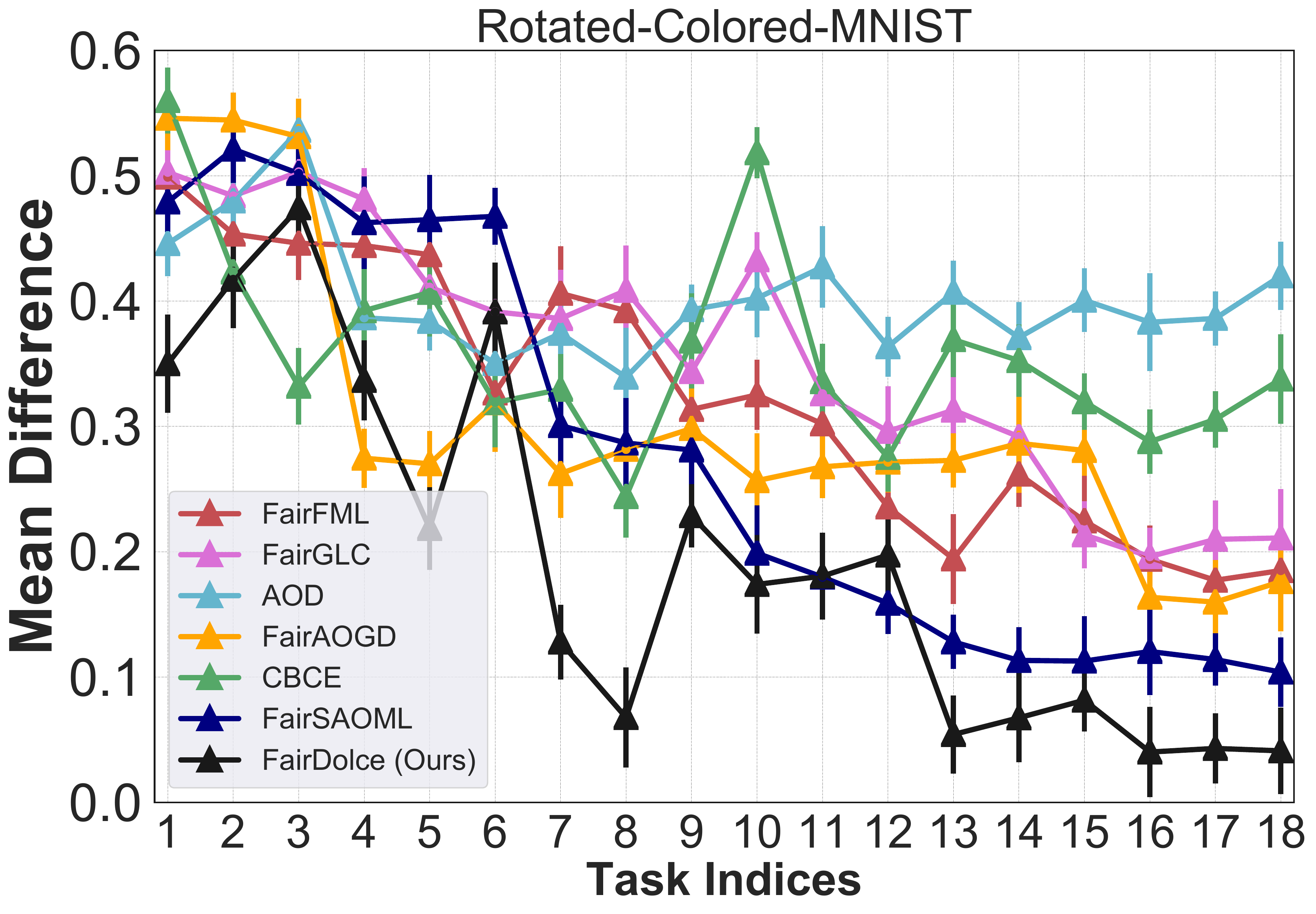}
        \caption{}
    \end{subfigure}
    \begin{subfigure}[b]{0.245\textwidth}
        \includegraphics[width=\textwidth]{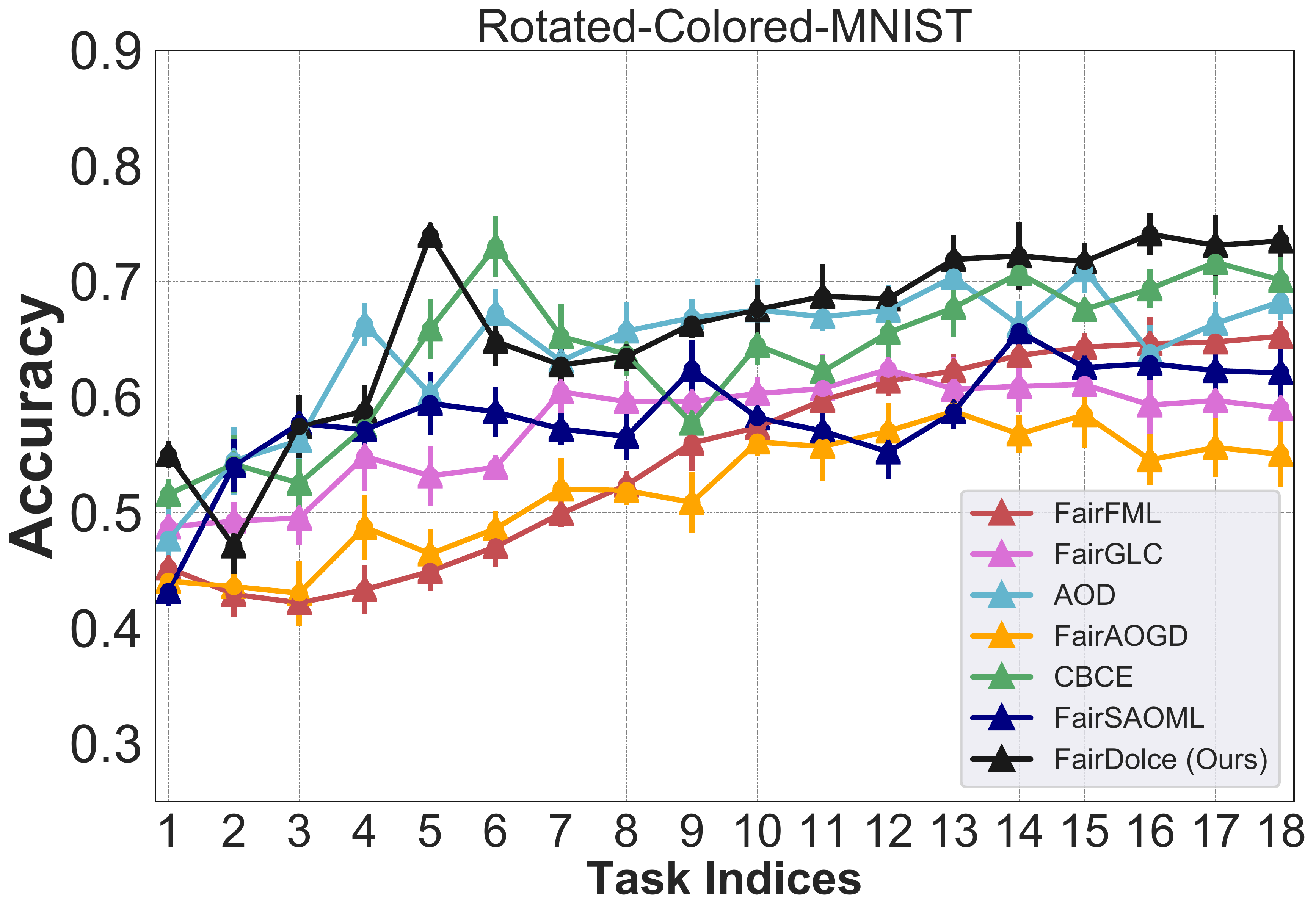}
        \caption{}
    \end{subfigure}

    \begin{subfigure}[b]{0.245\textwidth}
        \includegraphics[width=\textwidth]{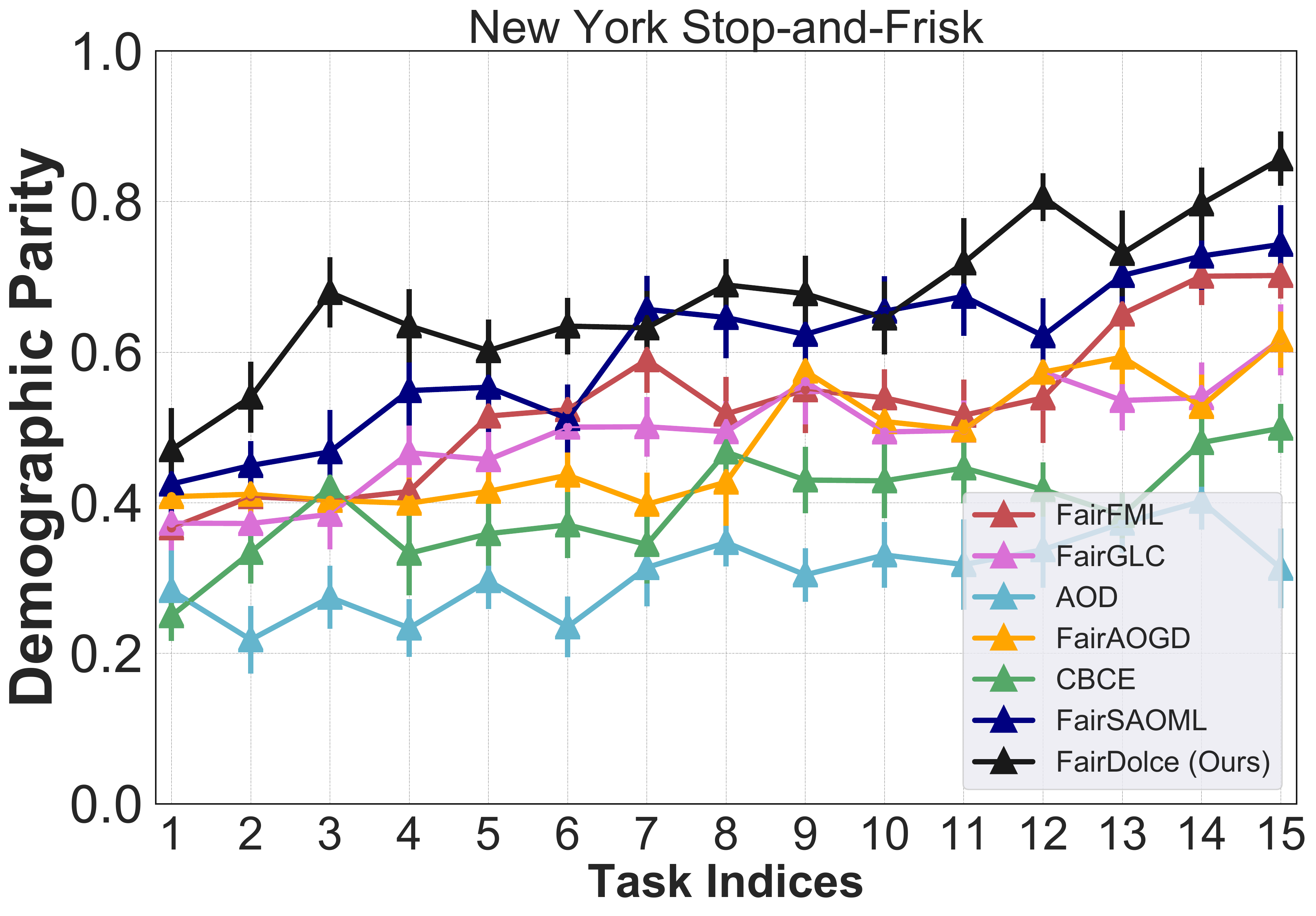}
        \caption{}
    \end{subfigure}
    \begin{subfigure}[b]{0.245\textwidth}
        \includegraphics[width=\textwidth]{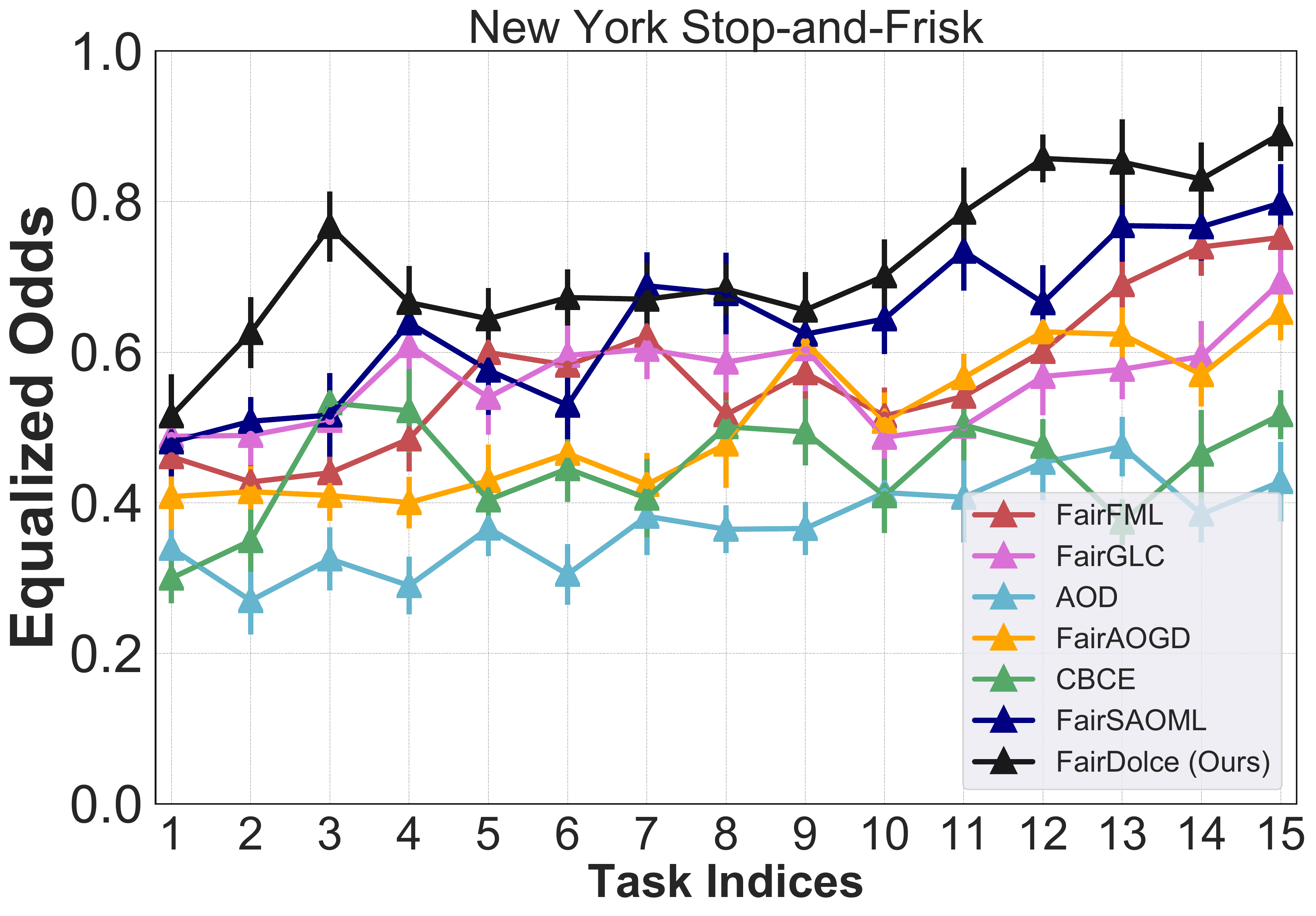}
        \caption{}
    \end{subfigure}
    \begin{subfigure}[b]{0.245\textwidth}
        \includegraphics[width=\textwidth]{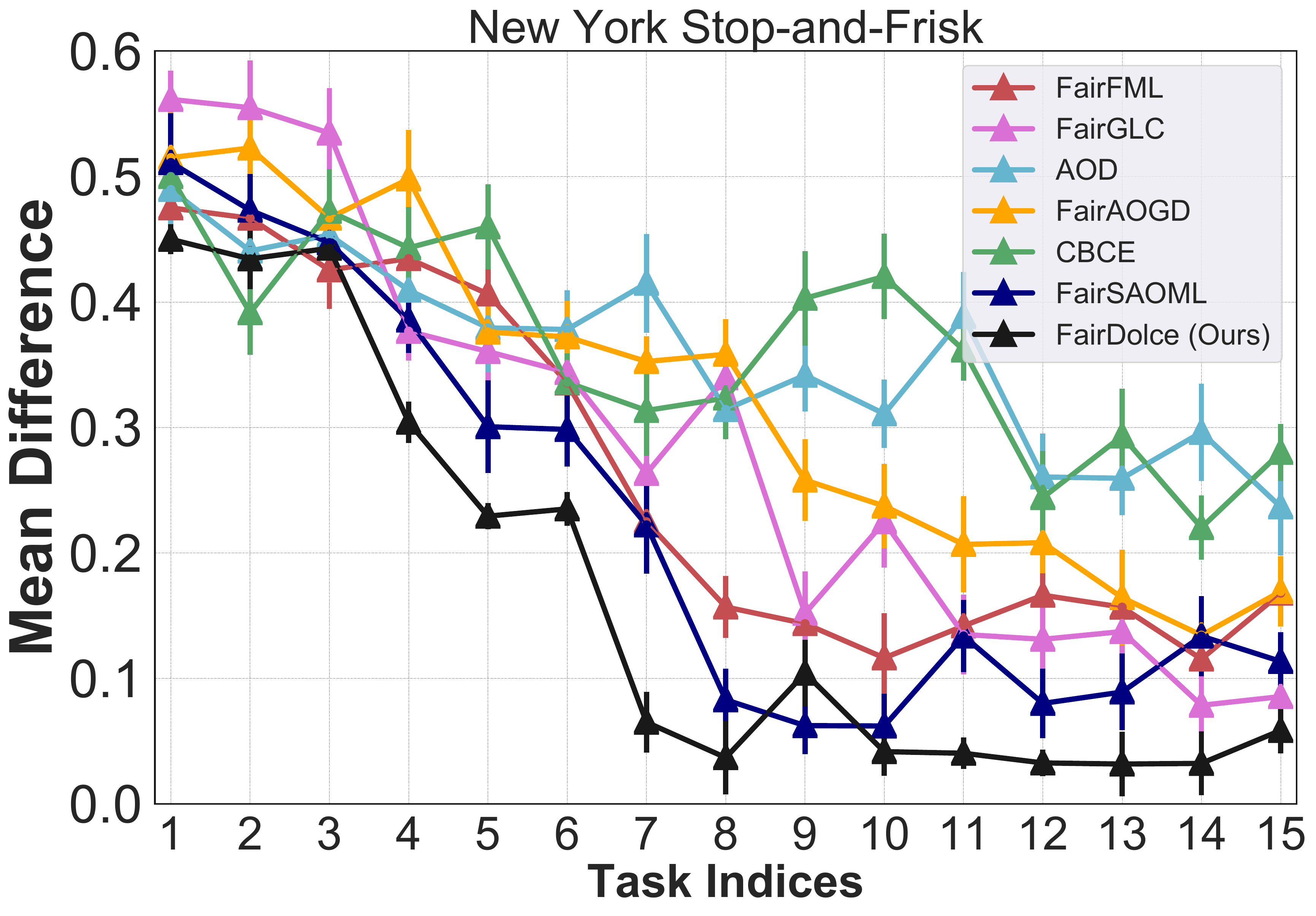}
        \caption{}
    \end{subfigure}
    \begin{subfigure}[b]{0.245\textwidth}
        \includegraphics[width=\textwidth]{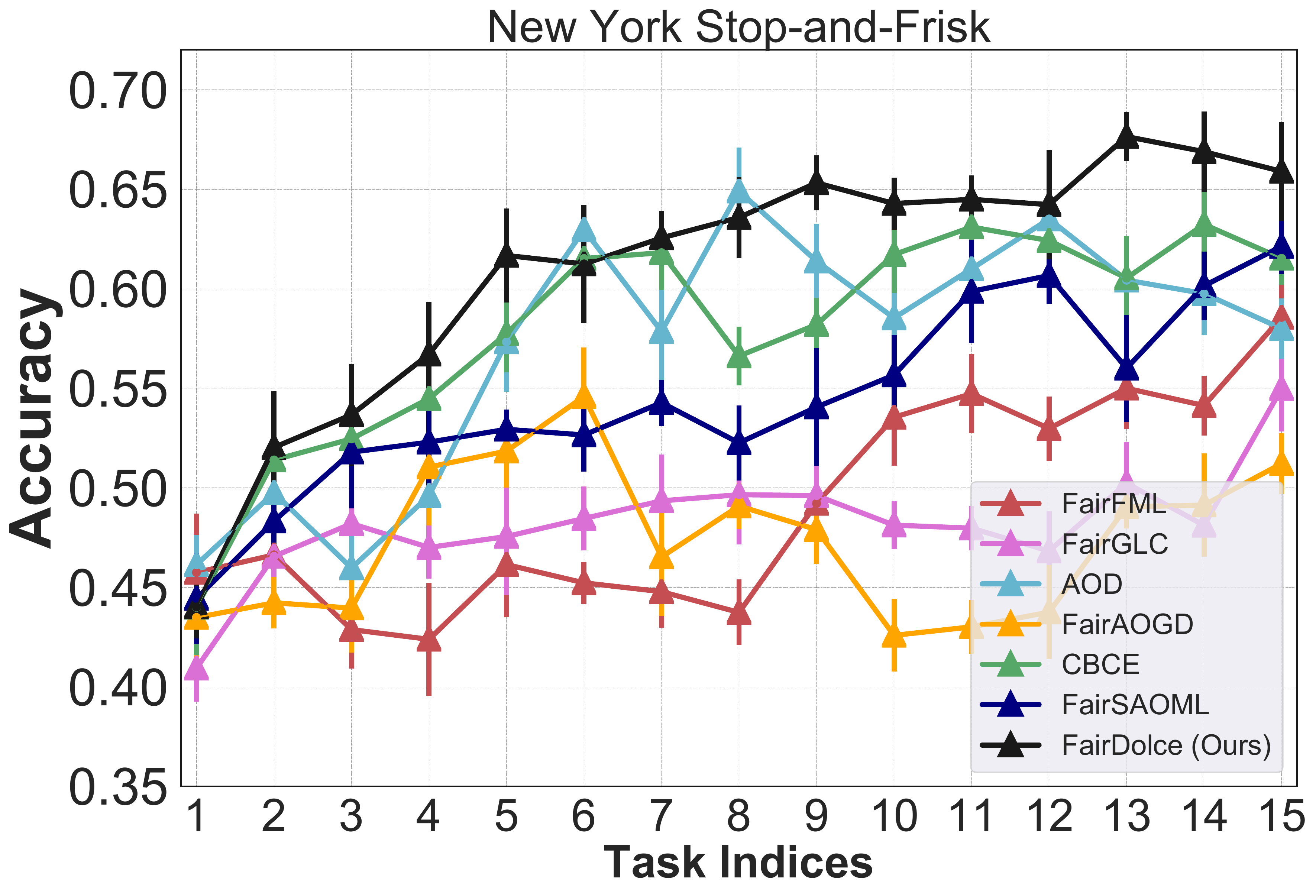}
        \caption{}
    \end{subfigure}

    \begin{subfigure}[b]{0.245\textwidth}
        \includegraphics[width=\textwidth]{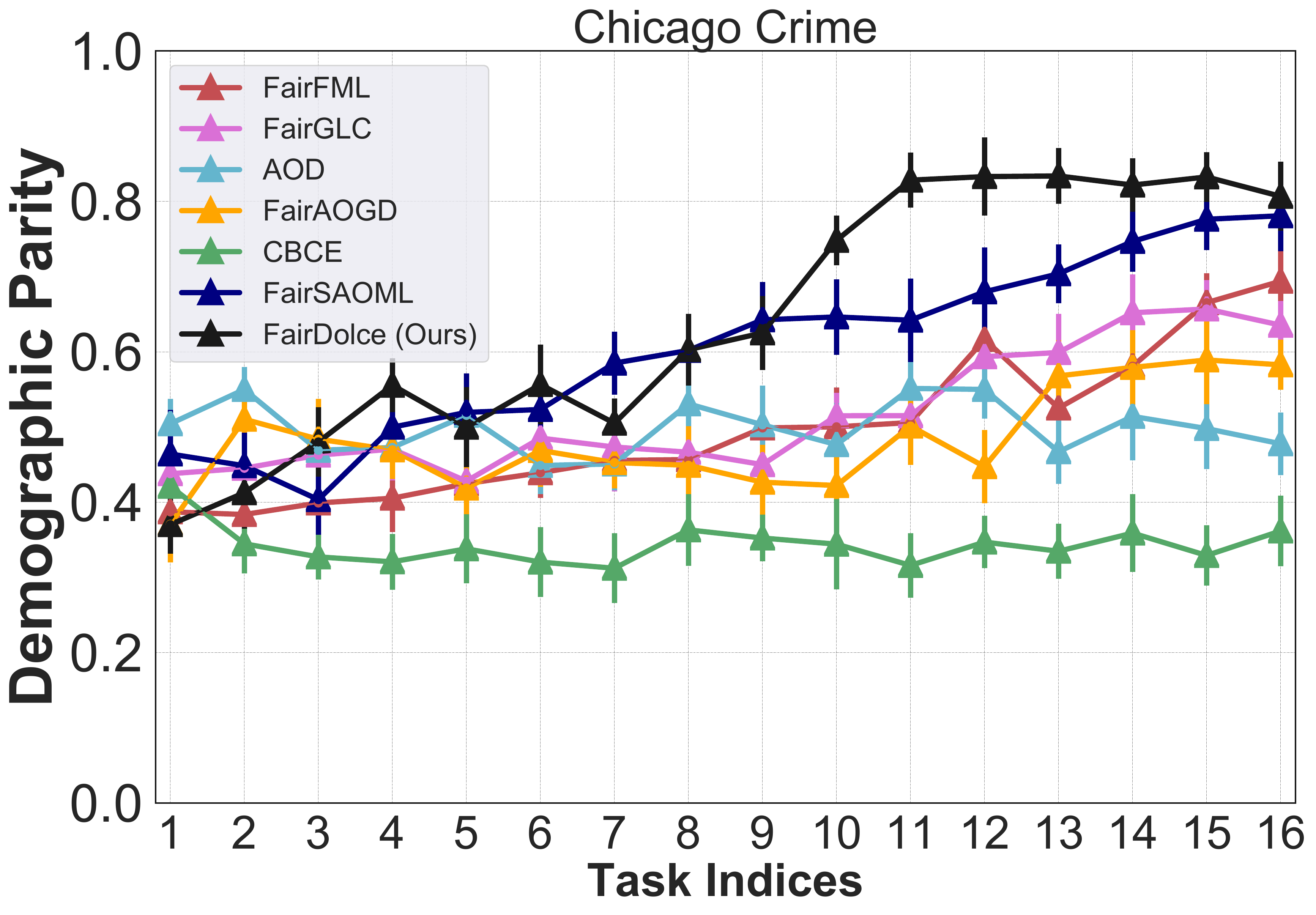}
        \caption{}
    \end{subfigure}
    \begin{subfigure}[b]{0.245\textwidth}
        \includegraphics[width=\textwidth]{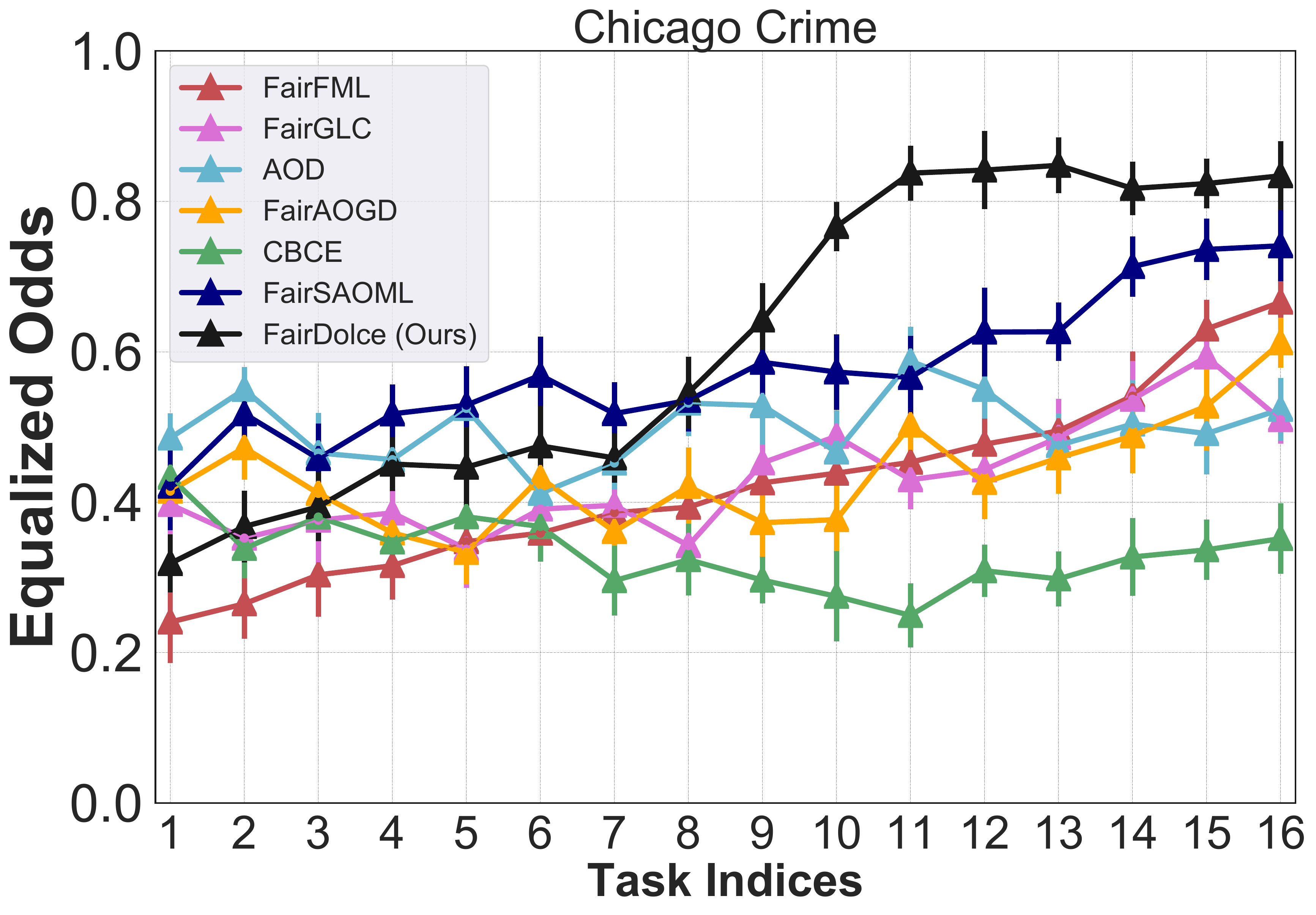}
        \caption{}
    \end{subfigure}
    \begin{subfigure}[b]{0.245\textwidth}
        \includegraphics[width=\textwidth]{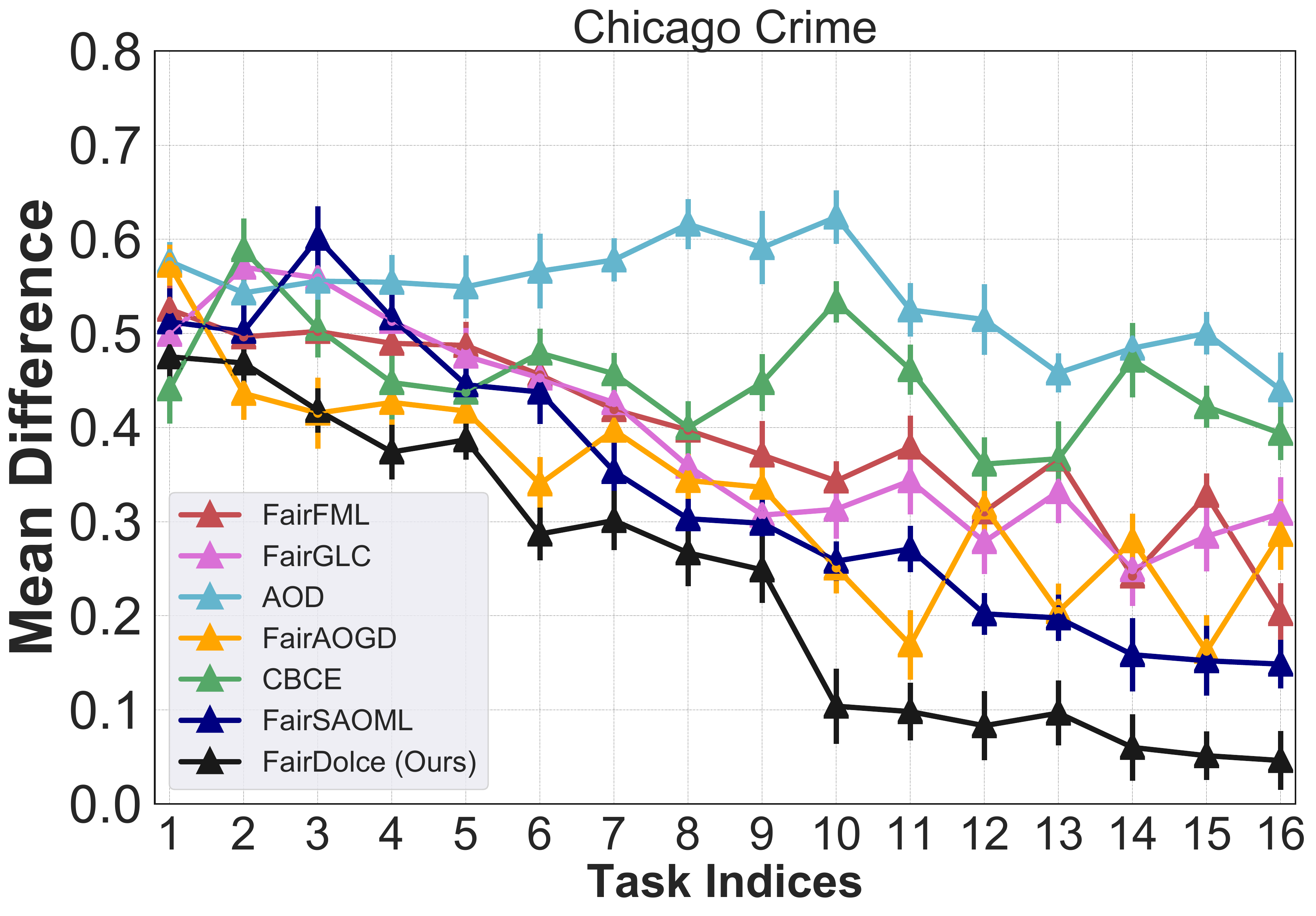}
        \caption{}
    \end{subfigure}
    \begin{subfigure}[b]{0.245\textwidth}
        \includegraphics[width=\textwidth]{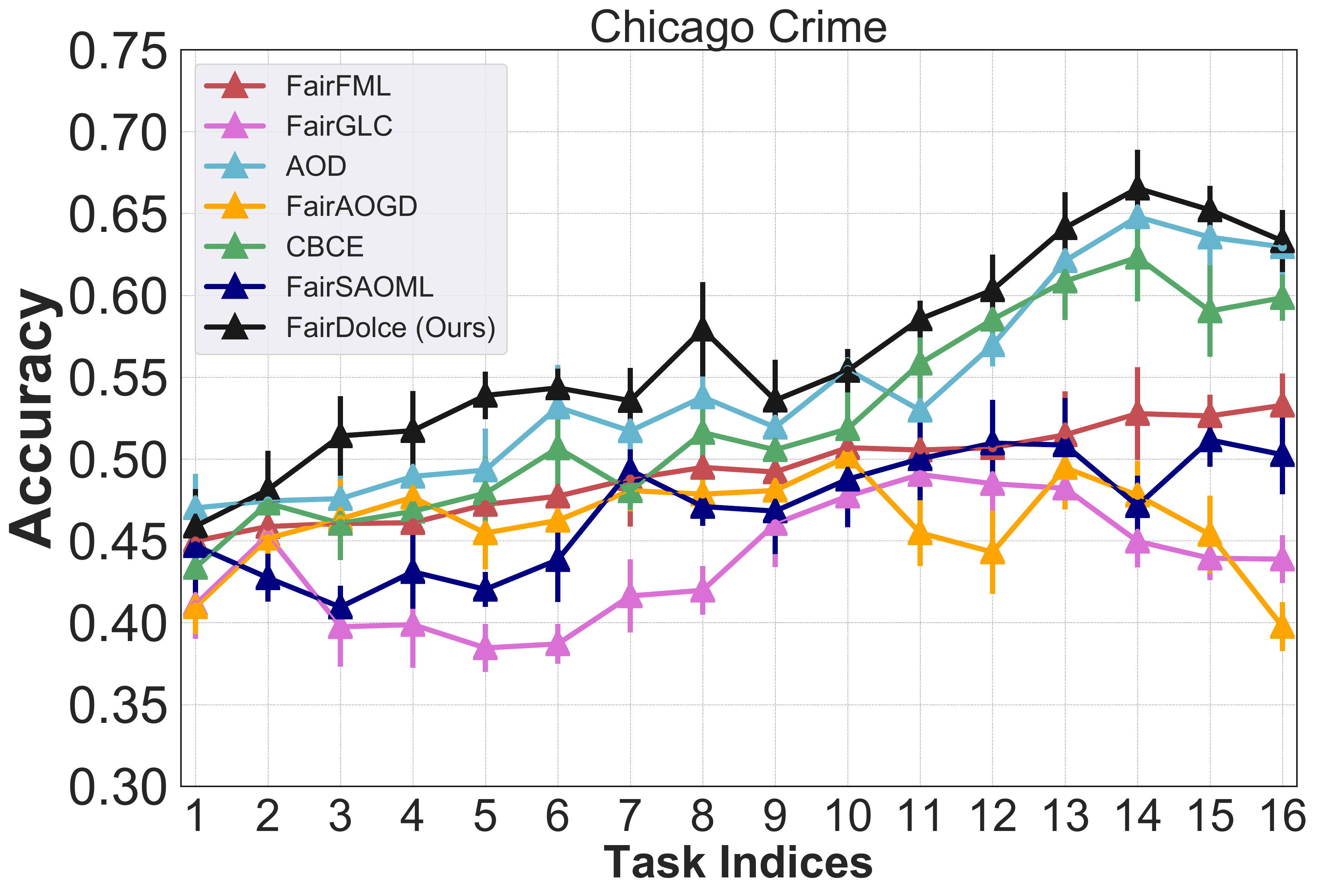}
        \caption{}
    \end{subfigure}

    \begin{subfigure}[b]{0.245\textwidth}
        \includegraphics[width=\textwidth]{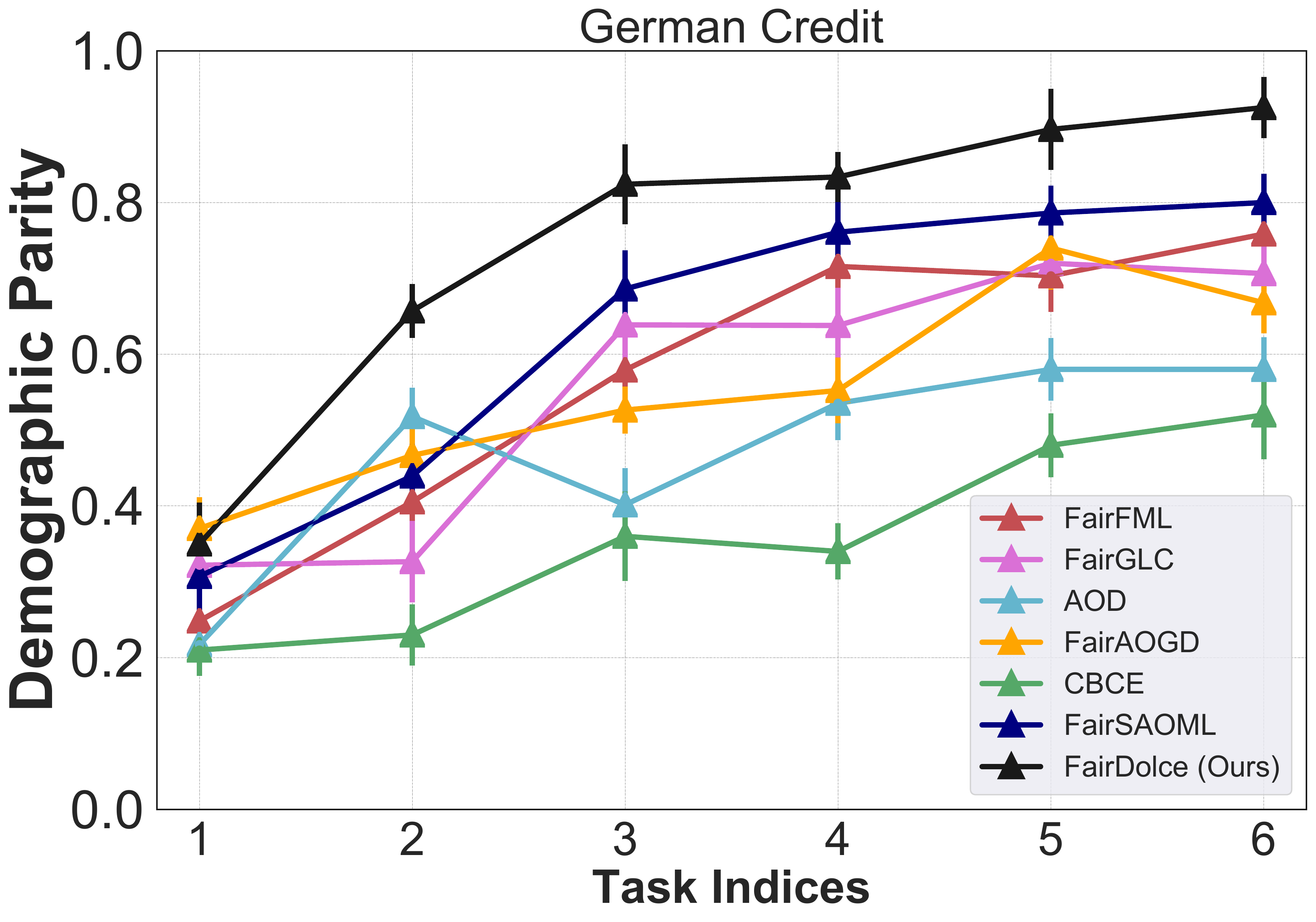}
        \caption{}
    \end{subfigure}
    \begin{subfigure}[b]{0.245\textwidth}
        \includegraphics[width=\textwidth]{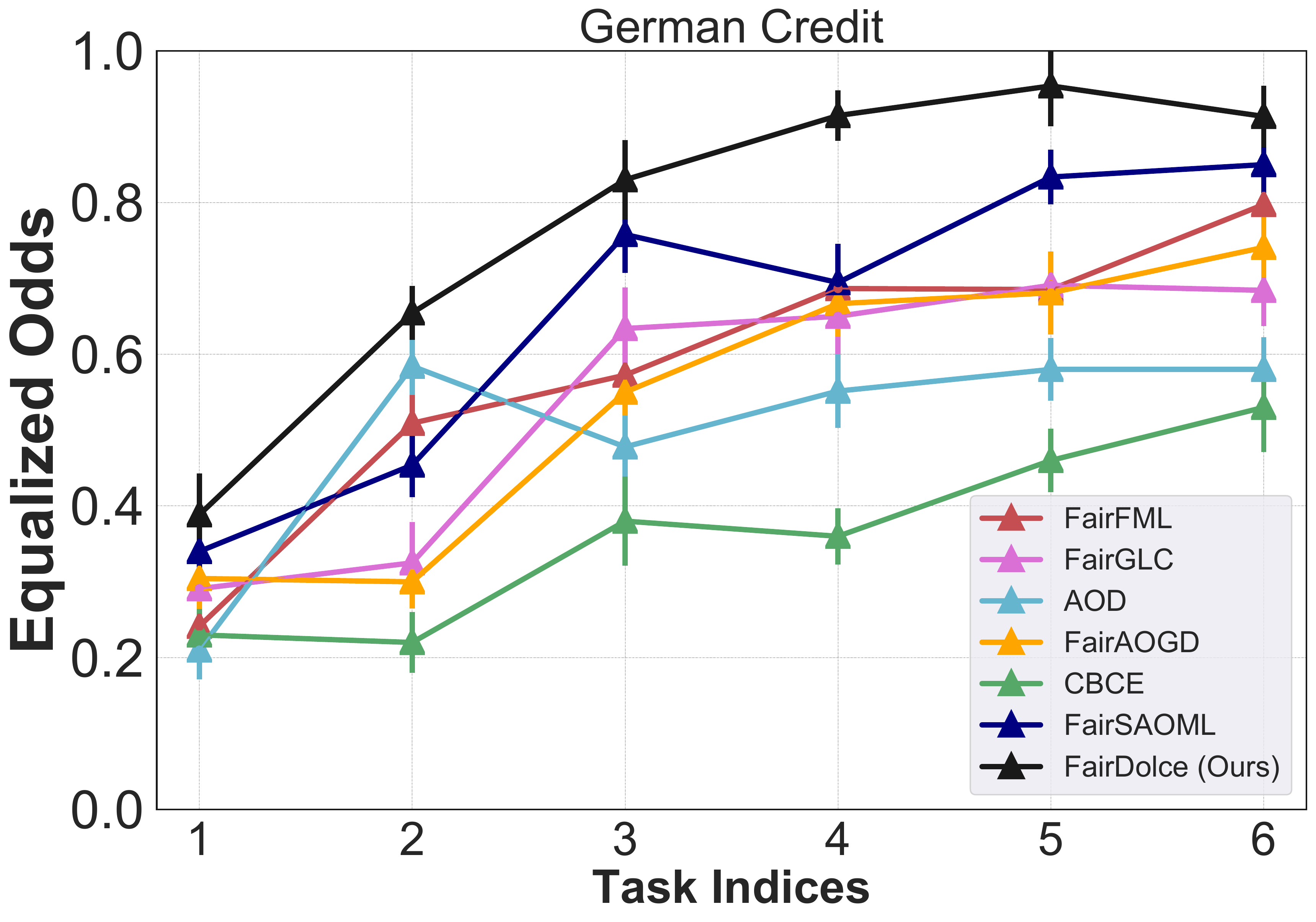}
        \caption{}
    \end{subfigure}
    \begin{subfigure}[b]{0.245\textwidth}
        \includegraphics[width=\textwidth]{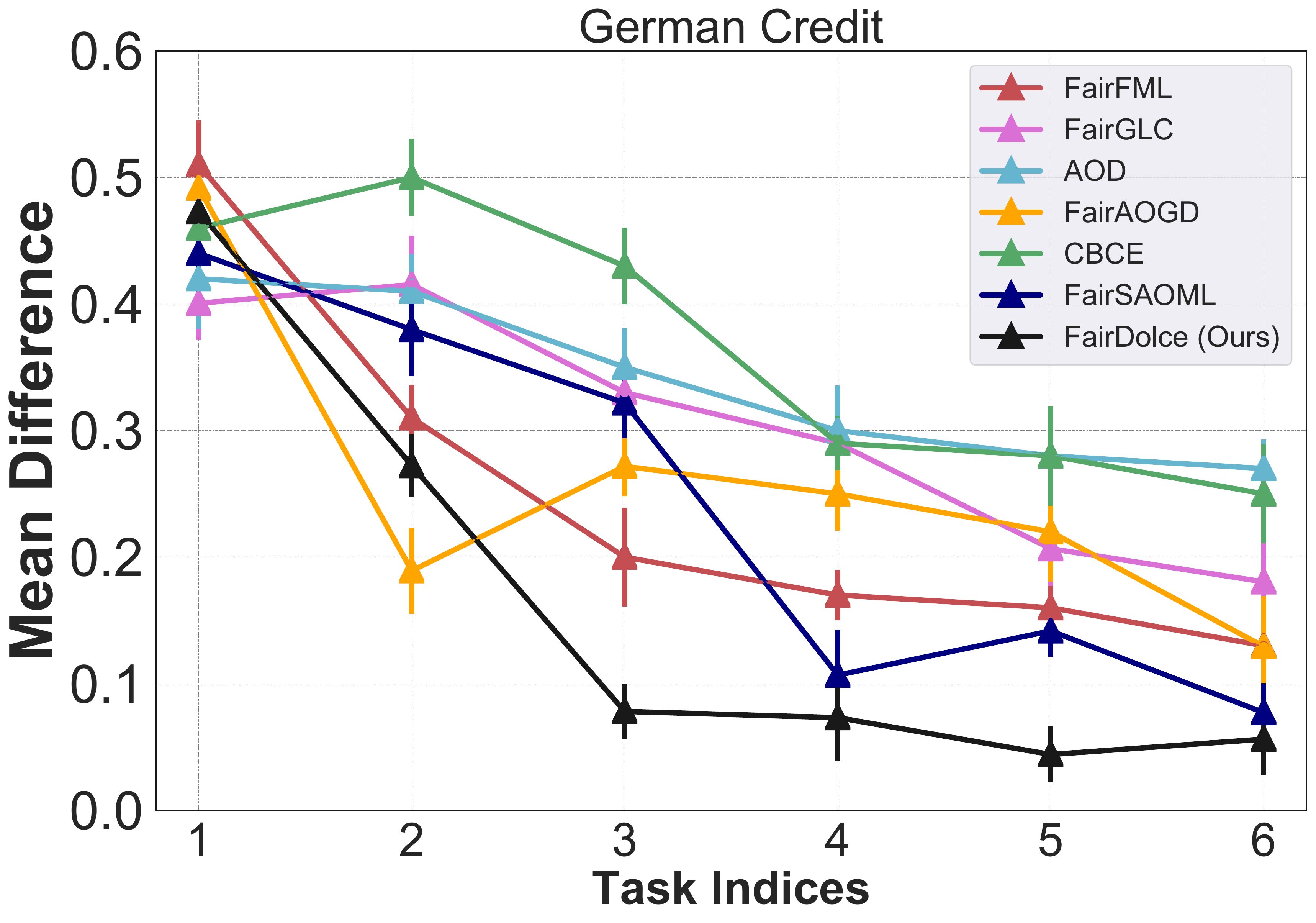}
        \caption{}
    \end{subfigure}
    \begin{subfigure}[b]{0.245\textwidth}
        \includegraphics[width=\textwidth]{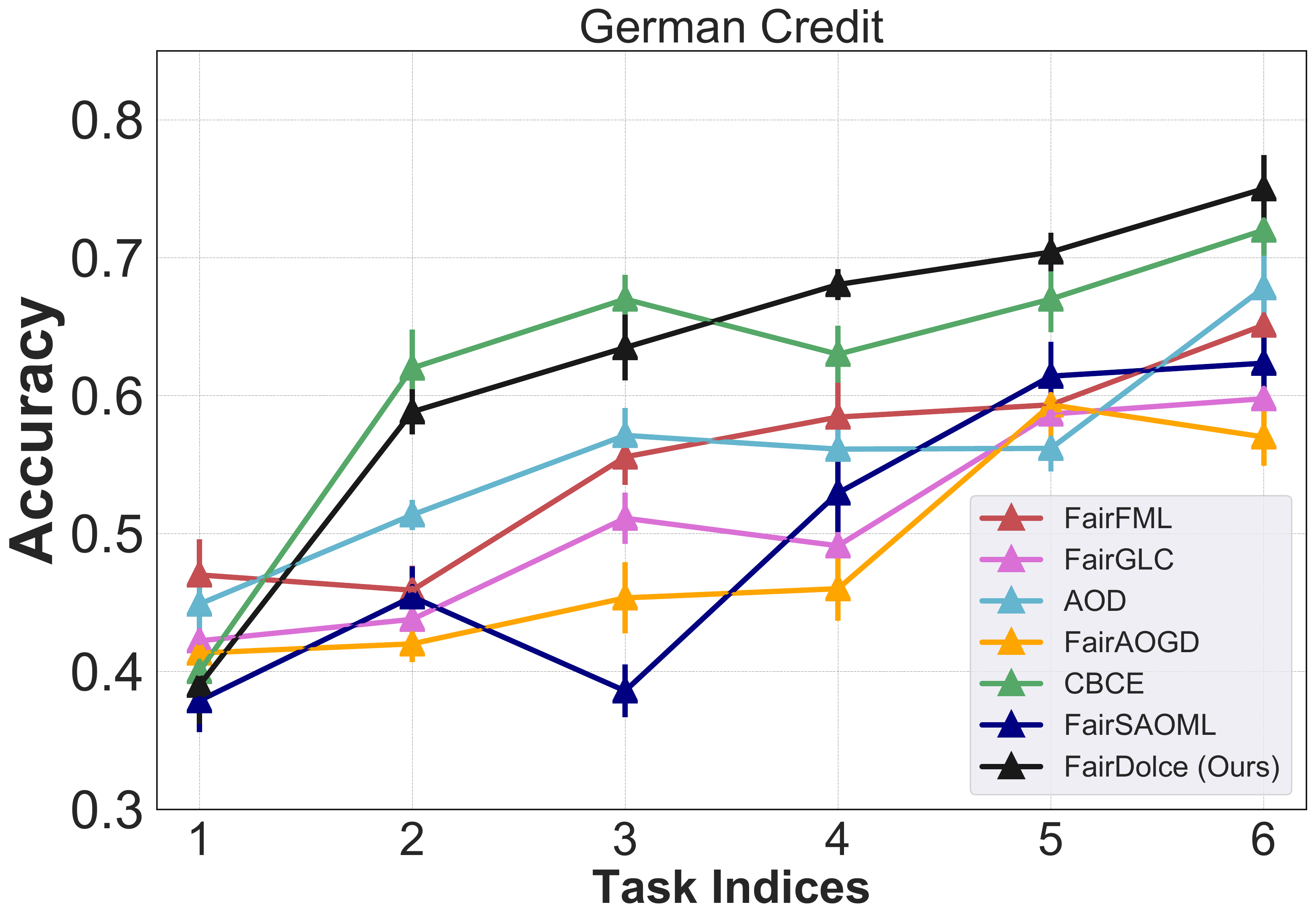}
        \caption{}
    \end{subfigure}
    \vspace{-3mm}
    \caption{Model performance over datasets through each time. (a-d) Rotated-Colored-MNIST;  (e-h) New York Stop-and-Frisk, (i-l) Chicago Crime; (m-p) German Credit.}
    \label{fig:all-data-performance}
\vspace{-3mm}
\end{figure*}

\begin{figure*}[!t]
\captionsetup[subfigure]{aboveskip=-1pt,belowskip=-1pt}
\centering
    \begin{subfigure}[b]{0.245\textwidth}
        \includegraphics[width=\textwidth]{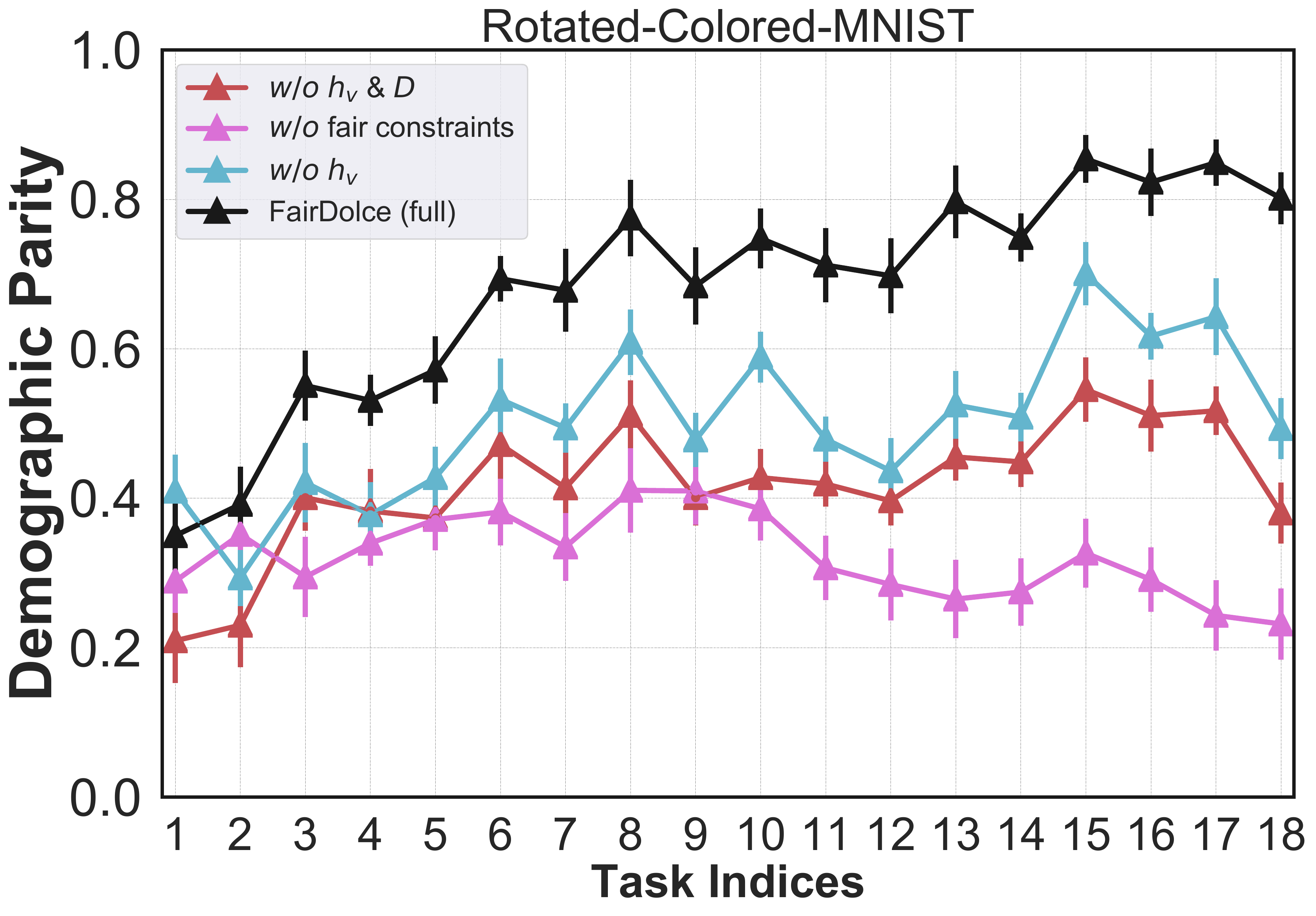}
    \end{subfigure}
    \begin{subfigure}[b]{0.245\textwidth}
        \includegraphics[width=\textwidth]{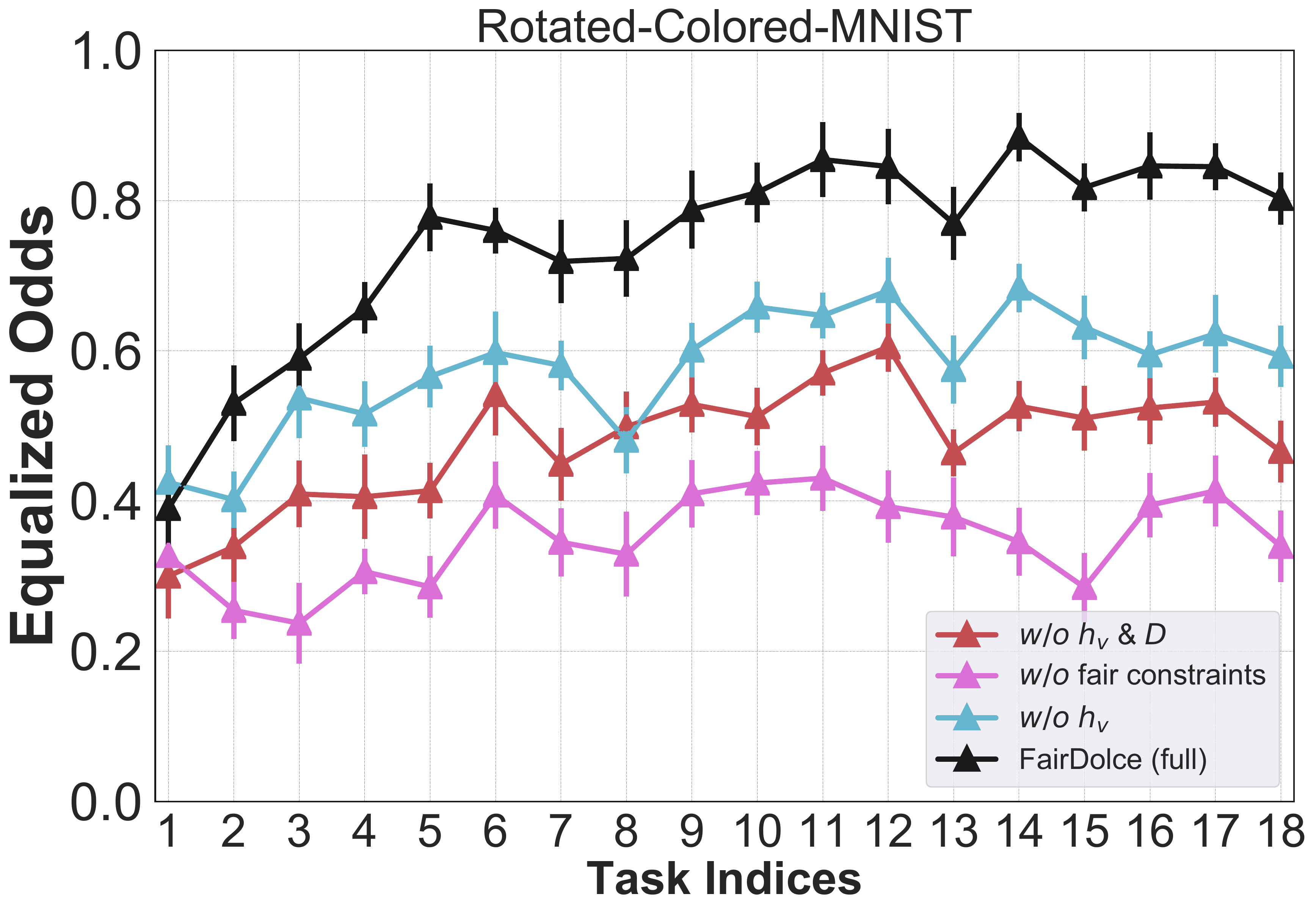}
    \end{subfigure}
    \begin{subfigure}[b]{0.245\textwidth}
        \includegraphics[width=\textwidth]{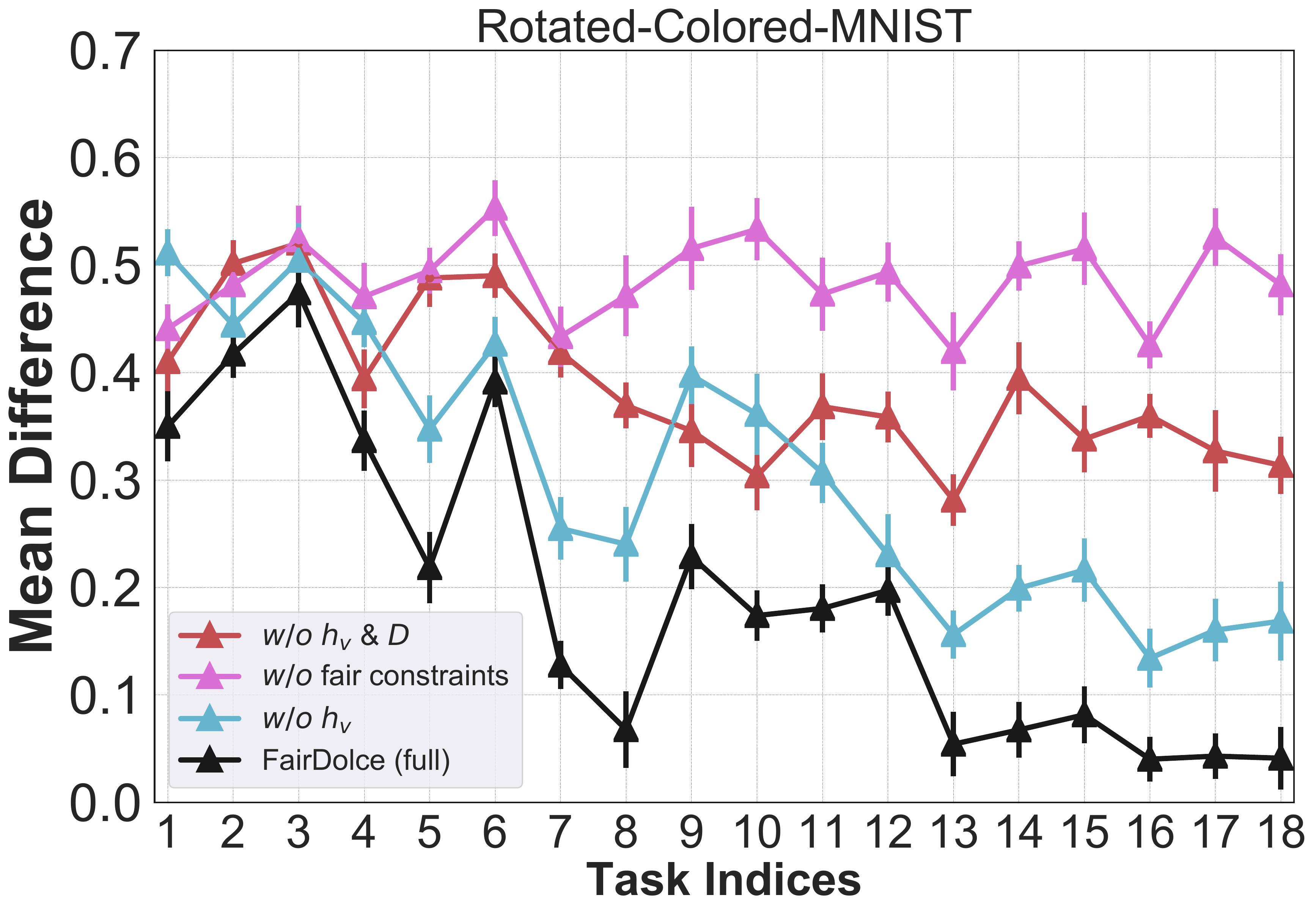}
    \end{subfigure}
    \begin{subfigure}[b]{0.245\textwidth}
        \includegraphics[width=\textwidth]{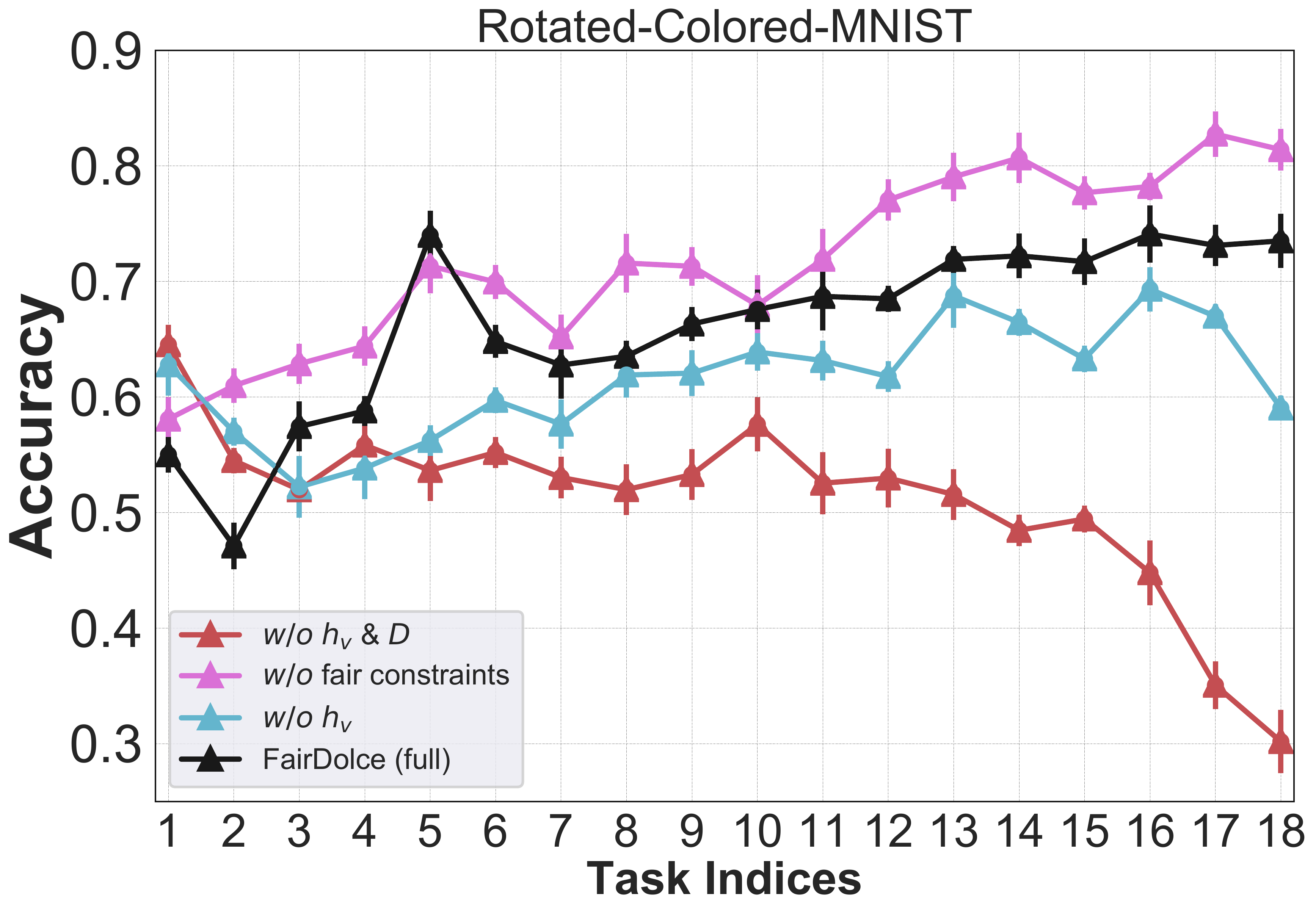}
    \end{subfigure}
    \vspace{-3mm}
    \caption{Ablation studies on the Rotated-Colored-MNIST dataset.}
    \label{fig:abs-rcmnist}
\vspace{-3mm}
\end{figure*}

\textbf{Baselines.}
We compare the performance of our proposed \sysname{} with six baseline methods from three perspectives: online learning for changing environments (AOD \cite{zhang-2020-AISTATS}, CBCE \cite{Jun-2017-AISTATS}), online fairness learning (FairFML \cite{zhao-KDD-2021}, FairAOGD \cite{AdpOLC-2016-ICML}, FairGLC \cite{GenOLC-2018-NeurIPS},), and the state-of-the-art online fairness learning for changing environments (FairSAOML \cite{zhao-KDD-2022}). AOD minimizes the strongly adaptive regret by running multiple online gradient descent algorithms over a set of dense geometric covering intervals. CBCE adapts changing environment in an online learning paradigm by combining the idea of sleeping bandits with the coin betting algorithm. FairFML controls bias in an online working paradigm and aims to attain zero-shot generalization with task-specific adaptation. FairFML focuses on a static environment and assumes tasks are sampled from an unchangeable distribution. FairAOGD is proposed for online learning with long-term constraints. In order to fit bias-prevention and compare them to \sysname{}, we specify such constraints as DDP stated in \cref{def:fairness-notion}. FairGLC rectifies FairAOGD by square-clipping the constraints in place of $g_i(\cdot), \forall i$. FairSAOML addresses fair online learning in changing environments by dynamically activating a subset of learning processes at each time through different combinations of task sets. 

\textbf{Architectures.}
For the rcMNIST image dataset, all images are resized to $28\times28$. Following \cite{zhang2022towards}, the semantic encoder and the style encoder consist of 4 strided convolutional layers followed by ReLU activation functions and Batch Normalization \cite{ioffe2015batch}. 
The decoder consists of 1 upsampling layer and 6 strided convolutional layers activated by ReLU. The classifier is performed by 2 FC hidden layers activated by ReLU. For tabular datasets (\textit{i.e.,} New York Stop-and-Frisk, Chicago Crime, and German Credit), followed by \cite{oh2022learning}, both encoders and the decoder contain one FC layer followed by LeakyReLU activation functions. The network architecture for the classifier is 1 FC layer activated by Sigmoid. Details for hyperparameters tuning are provided in \cref{sec:app-hyper-search}.

\section{Results}
\label{sec:results}
    \subsection{Adaptability for Changing Environments}
As shown in the first three columns in \cref{fig:all-data-performance}, model performance is sequentially evaluated by fairness metrics (\textit{i.e.,} DP, EO, and MD) introduced in \cref{sec:experiments}. Our results demonstrate \sysname{} outperforms baseline methods by giving the highest DP and EO values and the lowest MD overall. Specifically, it eventually meets the fair criteria of "80\%-rule" \cite{Biddle-Gower-2005} where DP and EO at the last several times are beyond 0.8. The last column of \cref{fig:all-data-performance} shows the change of model accuracies over time. We claim that \sysname{} substantially outperforms alternative approaches with robust performance under dynamic environments in achieving the highest accuracy of all time. 

As a tough competitor, FairSAOML addresses the same problem that we stated in this paper by proposing an expert-tracking technique in which experts' weights are updated accordingly. It assumes that larger experts containing information across a large number of tasks help the learner to adapt to the new environment quickly. In our experiments, although FairSAOML shows competitive performance in bias control, it cannot surpass ours. This is because when the environment changes, larger experts in FairSAOML retain information from the old environments, which hurts the performance of the learner. Similar reasons are attributed to interval-based learning algorithms, such as AOD and CBCE. In the case of changing environments, one major merit of \sysname{} is to disentangle data by separated representations in latent spaces, where only the semantic ones correspond to model predictions. This effectively controls the interference from various environments.

\vspace{-3mm}

\subsection{Ablation Studies}
We conduct ablation studies on all datasets  to demonstrate the contributions of three key components in our method. \cref{fig:abs-rcmnist} demonstrates the results on the rcMNIST dataset. Results on other datasets refer to \cref{fig:abs-ny,fig:abs-chicago,fig:abs-german} in \cref{app:exp-results}. 
(1) In this first study ($w/o\: h_v\:\&\:D$), we intentionally remove the variation encoder $h_v$ and the decoder $D$ and only keep the semantic encoder $h_s$ and the classifier $\omega$ with fair constraints. In this sense, the proposed architecture is equivalent to a simple neural network, and the semantic encoder functions as a featurizer. 
(2) In the second study ($w/o\: \text{fair constraints}$), we keep all modules but remove the fairness constraints $g$ from the classifier $\omega$. Without fair constraints, although the model provides better performance, fairness is not guaranteed over time. 
(3) In the third study ($w/o\: h_v$), only the variation encoder $h_v$ is removed. Without the variation encoder, the model is similar to conventional auto-encoders. The generalization ability to changing environments is weakened.

\section{Conclusion}
\label{sec:conclusion}
    To address the problem of fairness-aware online learning for changing environments, we first introduce a novel regret, namely \sysnameregret{}, in which it takes a mixed form of static and dynamic regret metrics. We challenge existing online learning methods by sequentially updating model parameters with a local change, where only parts of the parameters correspond to environmental change, and keep the remaining invariant to environments and thus solely for fair predictions. To this end, an effective algorithm \sysname{} is introduced, wherein it consists of two networks with auto-encoders. Through disentanglement, data are able to be encoded with an environment-invariant semantic factor and an environment-specific variation factor. Furthermore, semantic factors are used to learn a classifier under a group fairness constraint.
Detailed theoretic analysis and corresponding proofs justify the effectiveness of the proposed algorithm by demonstrating upper bounds for the loss regret and violation of fair constraints. Empirical studies based on real-world datasets show that our method outperforms state-of-the-art online learning techniques in both model accuracy and fairness.

\begin{acks}
\begin{sloppypar}
The research reported was supported by the National Science Foundation under grant number 2147375 and 1750911. 
\end{sloppypar}
\end{acks}

\newpage
\newpage
\bibliographystyle{ACM-Reference-Format}
\balance
\bibliography{reference}

\newpage
\newpage
\appendix

\section{Additional Experiment Details}
\label{app:exp-details}
    \subsection{Notations}
\label{sec:notations}

Vectors are denoted by lowercase bold face letters. Scalars are denoted by lowercase italic letters. Sets are denoted by uppercase calligraphic letters. Indices of task sequences are denoted as $[T]=\{1,\cdots,T\}$. $||\cdot||$ represents $\ell_2$ norm.

\begin{table}[!h]
    \centering
    \caption{Important notations and corresponding descriptions.}
    \vspace{-3mm}
    \begin{tabular}{c|p{6cm}}
        \toprule
        \textbf{Notations} & \textbf{Descriptions}  \\
        \cmidrule(lr){1-2}
        $T$ & total number of learning tasks\\
        $t$ & indices of tasks\\
        $f_t$ & loss function at time $t$  \\
        $g$ & fairness function \\
        $\omega$ & classification function\\
        $h_s$ & semantic encoder\\
        $h_v$ & variation encoder\\
        $D$ & decoder\\
        
        $\boldsymbol{\theta}$ & model parameters\\
        $\boldsymbol{\theta}^s$ & parameters of the semantic encoder\\
        $\boldsymbol{\theta}^v$ & parameters of the variation encoder\\
        $\boldsymbol{\theta}^d$ & parameters of the decoder\\
        $\boldsymbol{\theta}^{cls}$ & parameters of the classifier\\
        $\mathbf{u}^s$ & semantic comparators\\

        $\mathbf{s}$ & semantic factor (representation)\\
        $\mathbf{v}$ & variation factor \\

        $\mathcal{Q}_t$ & data batch sampled from the task pool at time $t$\\
        $Q$ & total number of quartet/doublet pairs in $\mathcal{Q}_t$\\
        $q$ & indices of quartet/doublet pair in $\mathcal{Q}_t$ \\
        $|\mathcal{Q}_t|$ & total number of samples in the batch $\mathcal{Q}_t$\\
        
        \bottomrule
    \end{tabular}
    \label{tab:notation}
\end{table}

\subsection{Hyperparameter Search}
\label{sec:app-hyper-search}
For each dataset, we tune the following hyperparameters: (1) the initial dual meta parameter $\lambda_1, \lambda_2, \lambda_3$ is chosen from $\{$0.00001, 0.0001, 0.001, 0.01, 0.1, 1, 10, 100, 1000, 10000 $\}$; (2) learning rates $\eta_1$ and $\eta_2$ for updating primal and dual variables are chosen from $\{$0.0001, 0.0005, 0.001, 0.005, 0.01, 0.05, 0.1, 0.5, 1, 5, 10, 50, 100, 500, 1000$\}$; (3) set margins $\eta_1=\eta_2=\eta_3=0.05$.

\section{Additional Experiment Results}
\label{app:exp-results}
    Ablation study results on the New York Stop-and-Frisk, Chicago Crime, and German Credit datasets are shown in \cref{fig:abs-ny,fig:abs-chicago,fig:abs-german}. Similar trends are observed as the Rotated-Colored-MNIST in \cref{fig:abs-rcmnist}.

\section{Sketch Proof of Theorem 1}
\label{app:proof}
    \begin{proof}
Using $\eta_{1,t} = \frac{\eta_{1,0}}{\sqrt{T}}, \eta_{2,t}=\frac{\eta_{2,0}}{\sqrt{\eta_{1,t}}}, \forall t\in[T]$ given in the Theorem yelds
\begin{align*}
    \sum_{t=1}^T \Delta_t(\mathbf{u}_T^c)&=\frac{\sqrt{T}}{\eta_{1,0}}\sum_{t=1}^T(||\mathbf{u}_T^c-\boldsymbol{\theta}_t^l||^2-||\mathbf{u}_T^c-\boldsymbol{\theta}_{t+1}^l||^2)\\
    &\leq \frac{\sqrt{T}}{\eta_{1,0}}||\mathbf{u}_T^c-\boldsymbol{\theta}_1^l||^2
\end{align*}
Combing the above inequality with the Lemma 1 presented in \cite{yi2021regret} yields
\begin{align*}
    &\sum_{t=1}^T f_t(h_s(\boldsymbol{\theta}_t^s),\boldsymbol{\theta}^{cls}_t) -\min_{\boldsymbol{\theta}^{cls}\in\Theta} \sum_{t=1}^T f_t(h_s(\mathbf{u}_t^s),\boldsymbol{\theta}^{cls}) \\
    &\leq \frac{\sqrt{T}}{\eta_{1,0}}||\mathbf{u}_T^c-\boldsymbol{\theta}_1^l||^2 + \frac{G^2\eta_{1,0}}{2}\sqrt{T}
\end{align*}
which yields the bound for the loss regret. Similarly, with the Lemma 1 presented in \cite{yi2021regret}, it yields
\begin{align*}
    ||\lambda_{T,1}||^2\leq\beta T
\end{align*}
where $\beta=\frac{2}{\eta_{1,0}\sqrt{T}}||\mathbf{u}_T^c-\boldsymbol{\theta}_1^l||^2+\frac{G^2\eta_{1,0}}{\sqrt{T}}+\frac{\eta_{2,0}^2 F^2}{\eta_{1,0}T T}+2F$.
Together the above inequality with $\eta_{1,t} = \frac{\eta_{1,0}}{\sqrt{T}}, \eta_{2,t}=\frac{\eta_{2,0}}{\sqrt{\eta_{1,t}}}, \forall t\in[T]$, we have
\begin{align*}
    \sum_{t=1}^T \Big|\Big|\big[g(h_s(\boldsymbol{\theta}_t^s),\boldsymbol{\theta}^{cls}_t)\big]_+\Big|\Big| 
    \leq \frac{\sqrt{m\eta_{1,0}}}{\eta_{2,0}T}||\lambda_{{T,1}}||
    \leq \frac{m\eta_{1,0}\beta}{\eta_{2,0}}T^{\frac{1}{4}}
\end{align*}
which yields the bounds for the violation of the long-term constraints.
\end{proof}

\begin{figure*}[!h]
\captionsetup[subfigure]{aboveskip=-1pt,belowskip=-1pt}
\centering
    \begin{subfigure}[b]{0.245\textwidth}
        \includegraphics[width=\textwidth]{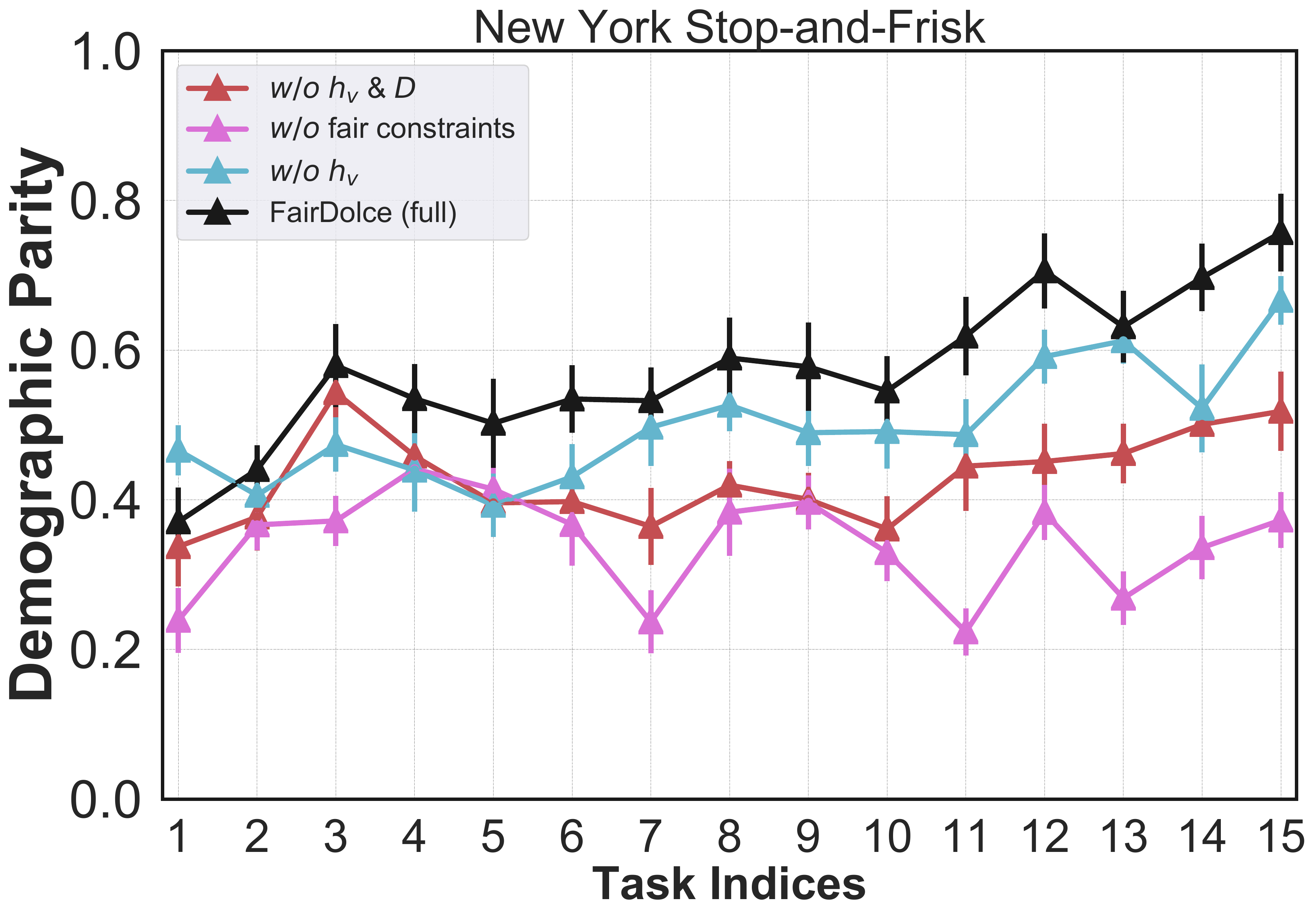}
    \end{subfigure}
    \begin{subfigure}[b]{0.245\textwidth}
        \includegraphics[width=\textwidth]{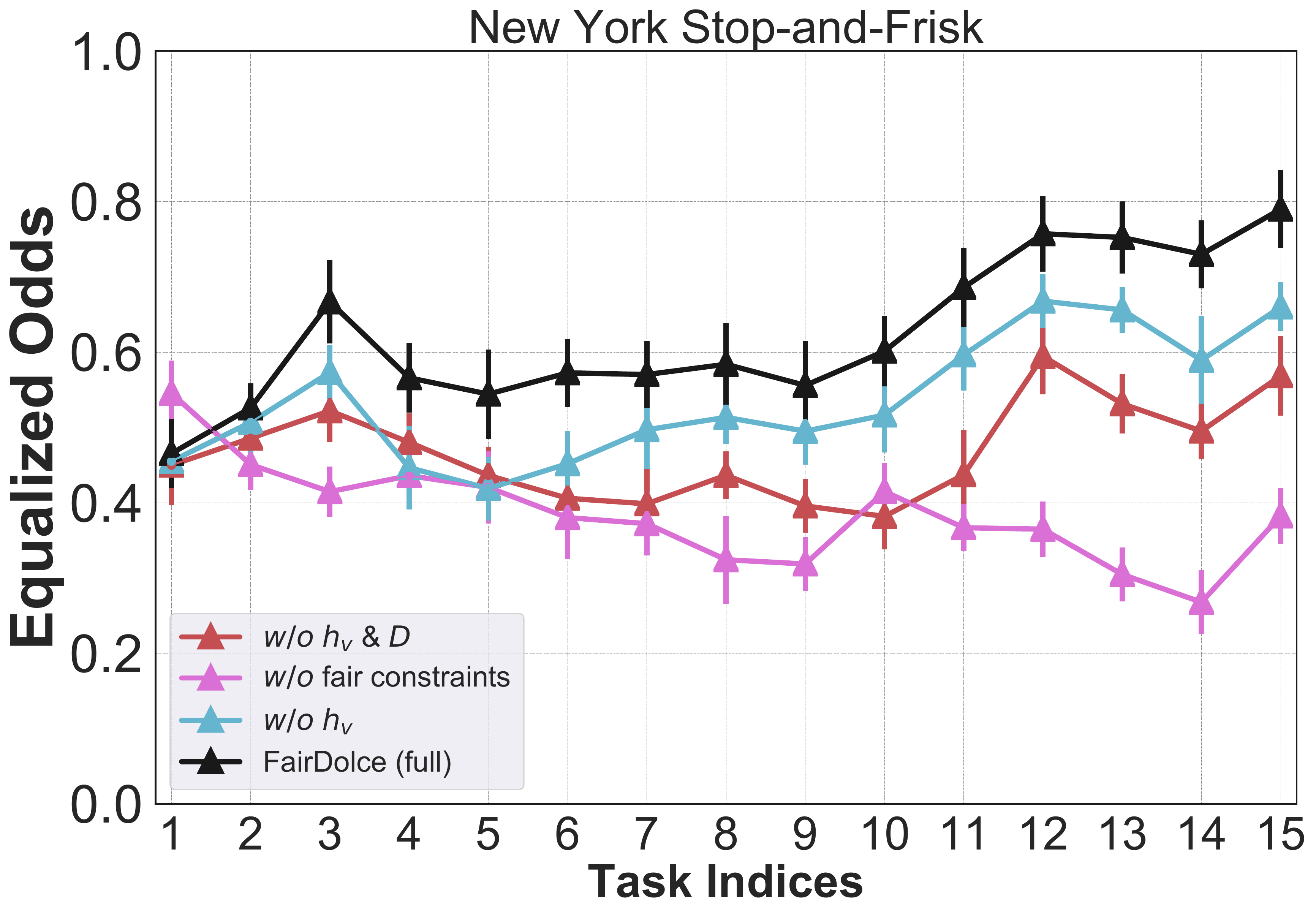}
    \end{subfigure}
    \begin{subfigure}[b]{0.245\textwidth}
        \includegraphics[width=\textwidth]{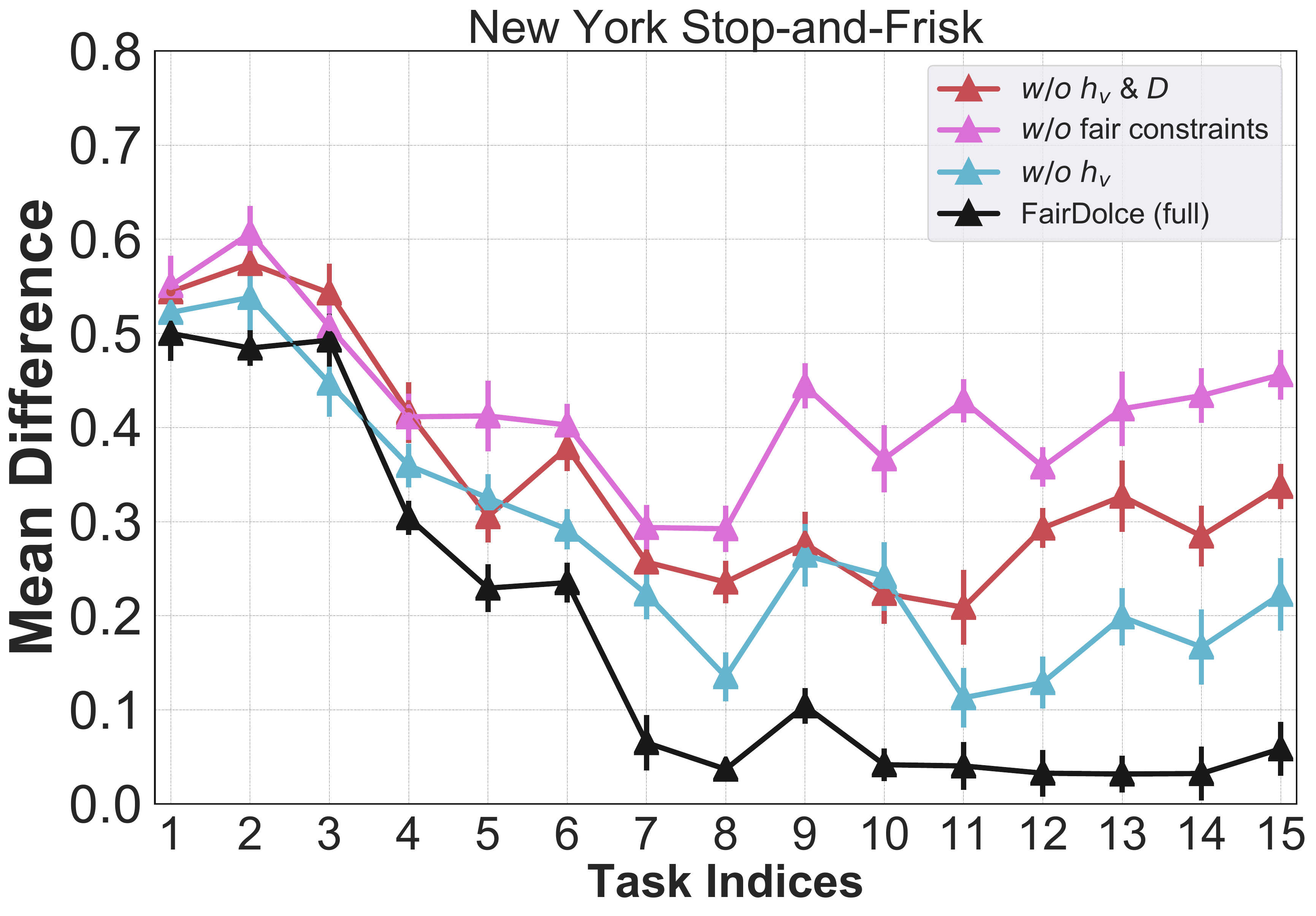}
    \end{subfigure}
    \begin{subfigure}[b]{0.245\textwidth}
        \includegraphics[width=\textwidth]{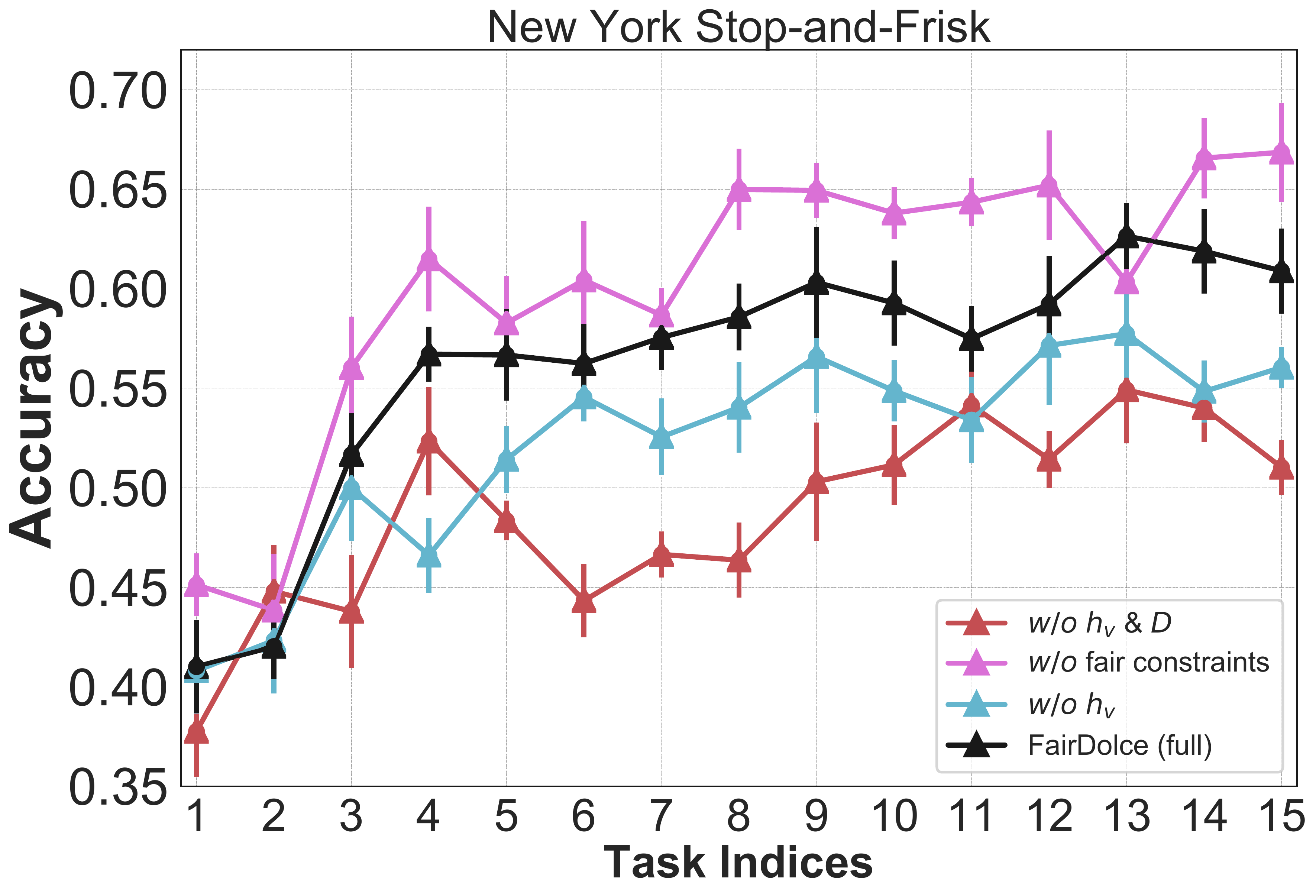}
    \end{subfigure}
    \caption{Ablation studies on the New York Stop-and-Frisk dataset.}
    \label{fig:abs-ny}
\end{figure*}

\begin{figure*}[!h]
\captionsetup[subfigure]{aboveskip=-1pt,belowskip=-1pt}
\centering
    \begin{subfigure}[b]{0.245\textwidth}
        \includegraphics[width=\textwidth]{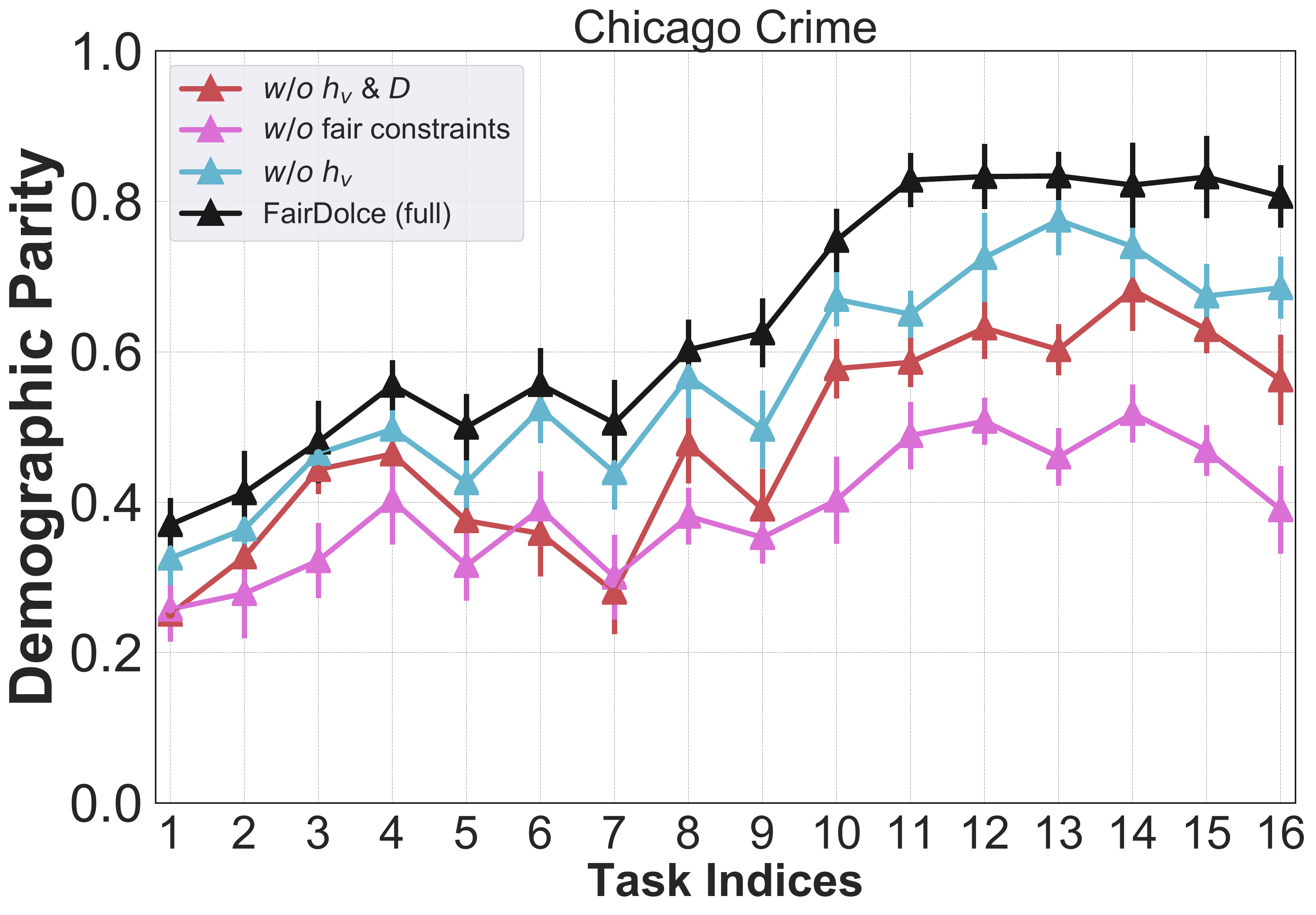}
    \end{subfigure}
    \begin{subfigure}[b]{0.245\textwidth}
        \includegraphics[width=\textwidth]{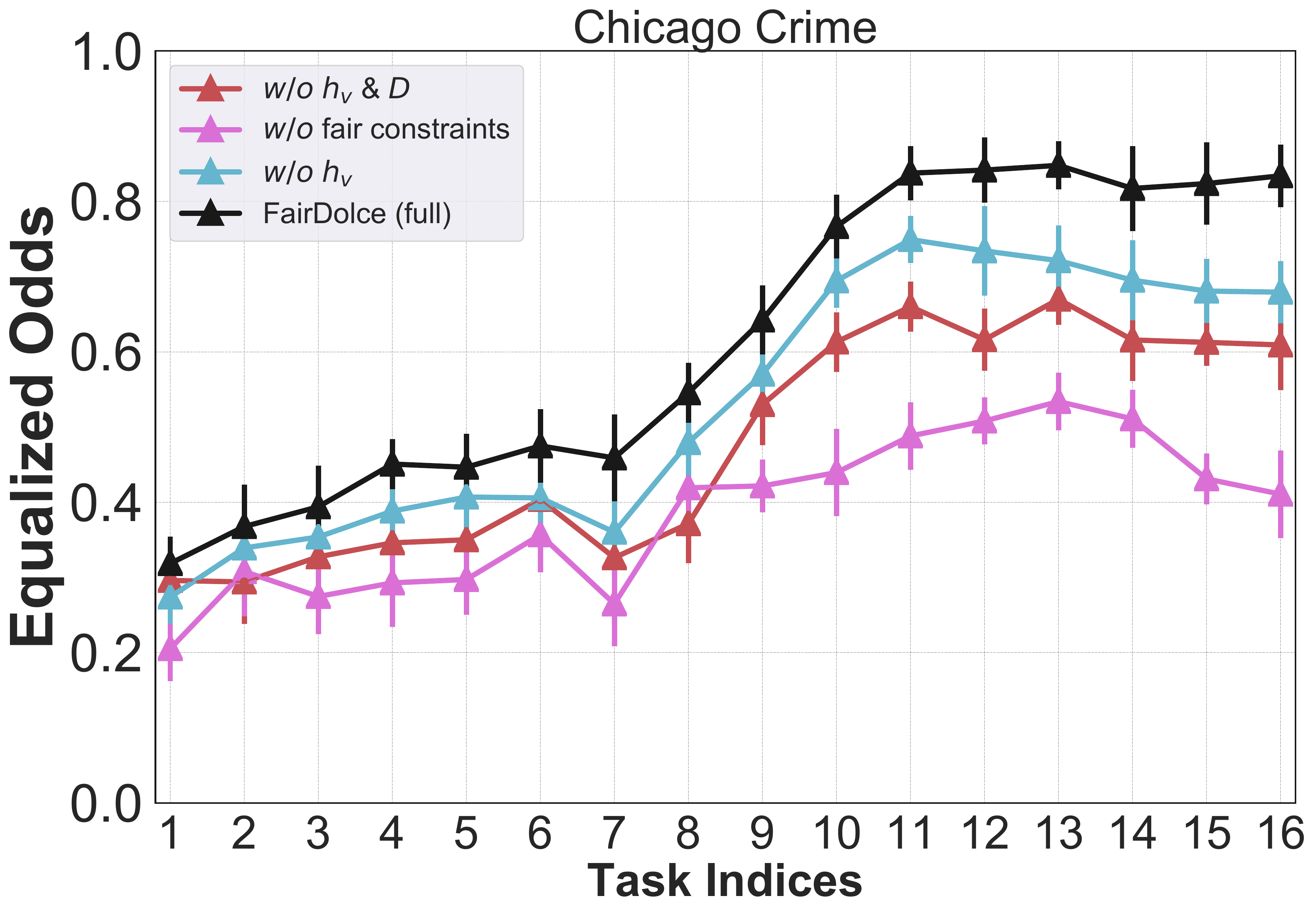}
    \end{subfigure}
    \begin{subfigure}[b]{0.245\textwidth}
        \includegraphics[width=\textwidth]{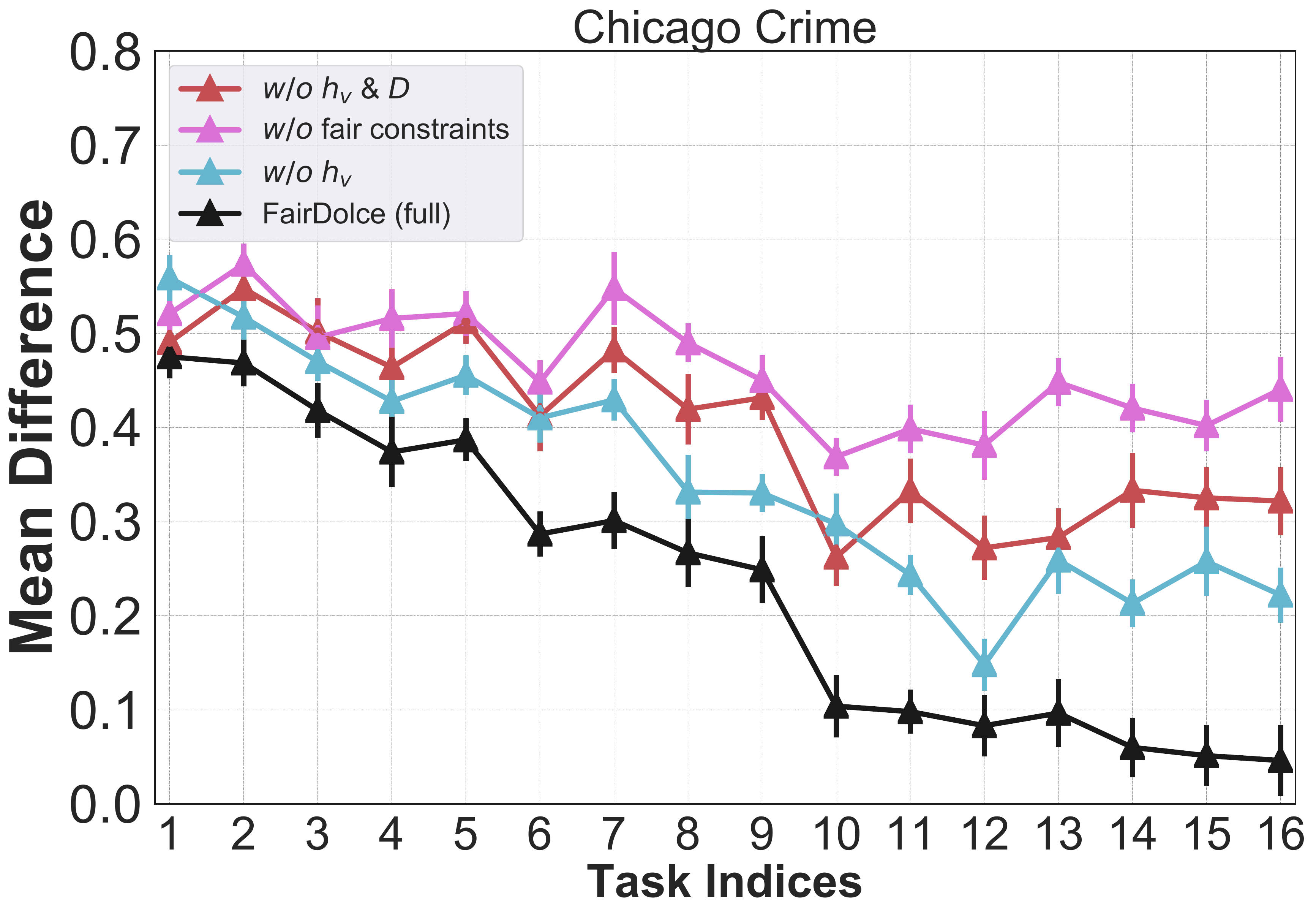}
    \end{subfigure}
    \begin{subfigure}[b]{0.245\textwidth}
        \includegraphics[width=\textwidth]{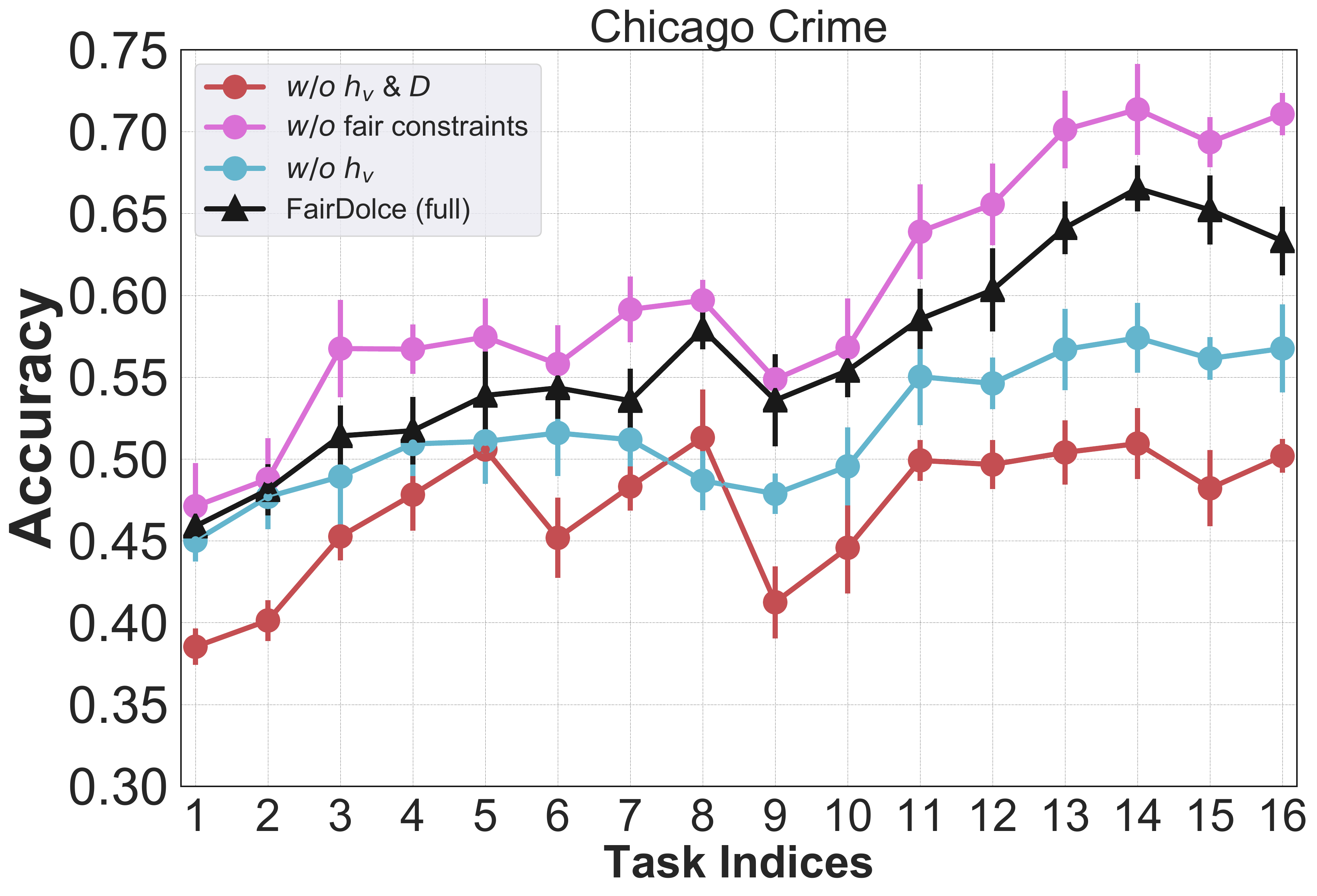}
    \end{subfigure}
    \caption{Ablation studies on the Chicago Crime dataset.}
    \label{fig:abs-chicago}
\end{figure*}

\begin{figure*}[!h]
\captionsetup[subfigure]{aboveskip=-1pt,belowskip=-1pt}
\centering
    \begin{subfigure}[b]{0.245\textwidth}
        \includegraphics[width=\textwidth]{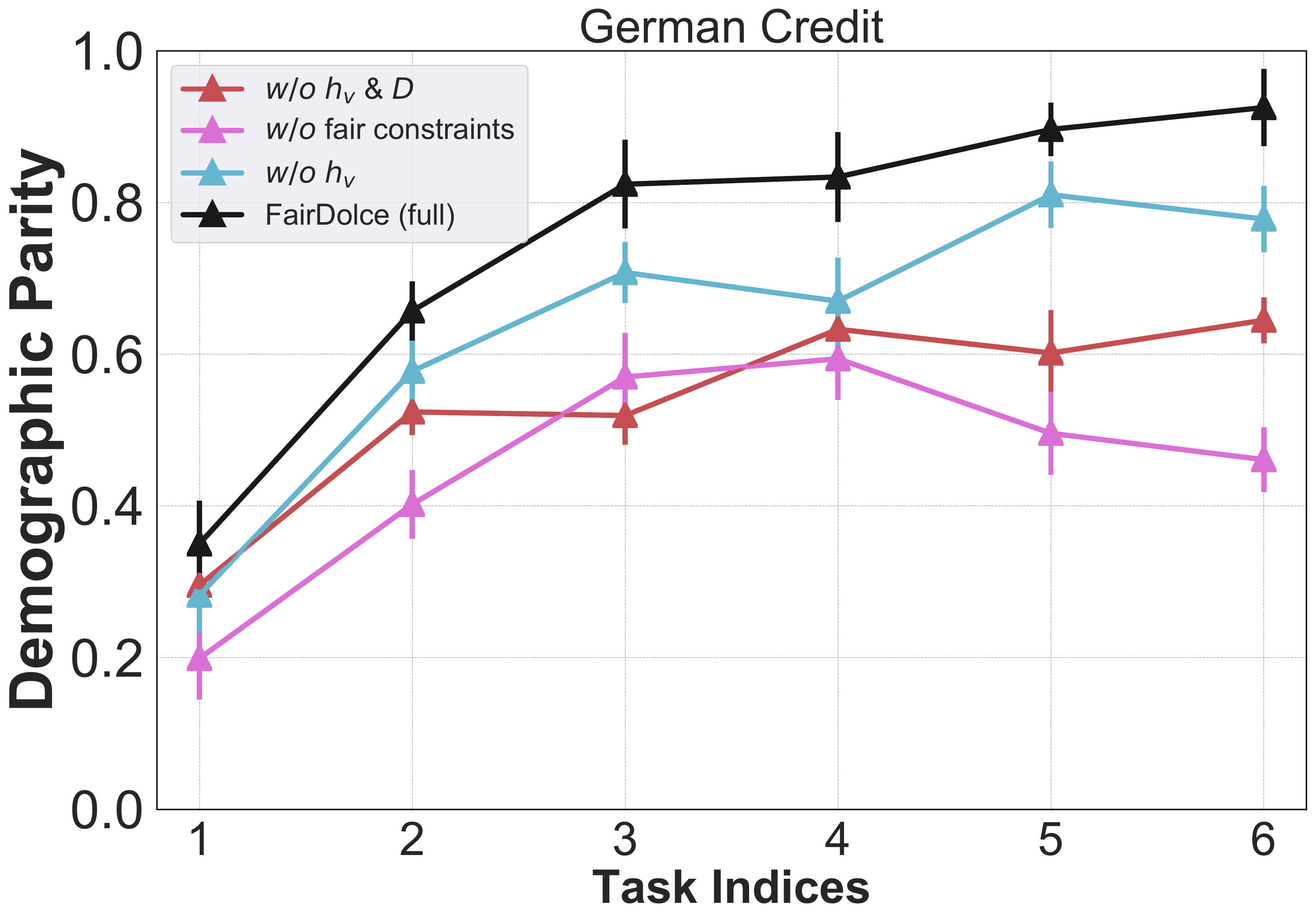}
    \end{subfigure}
    \begin{subfigure}[b]{0.245\textwidth}
        \includegraphics[width=\textwidth]{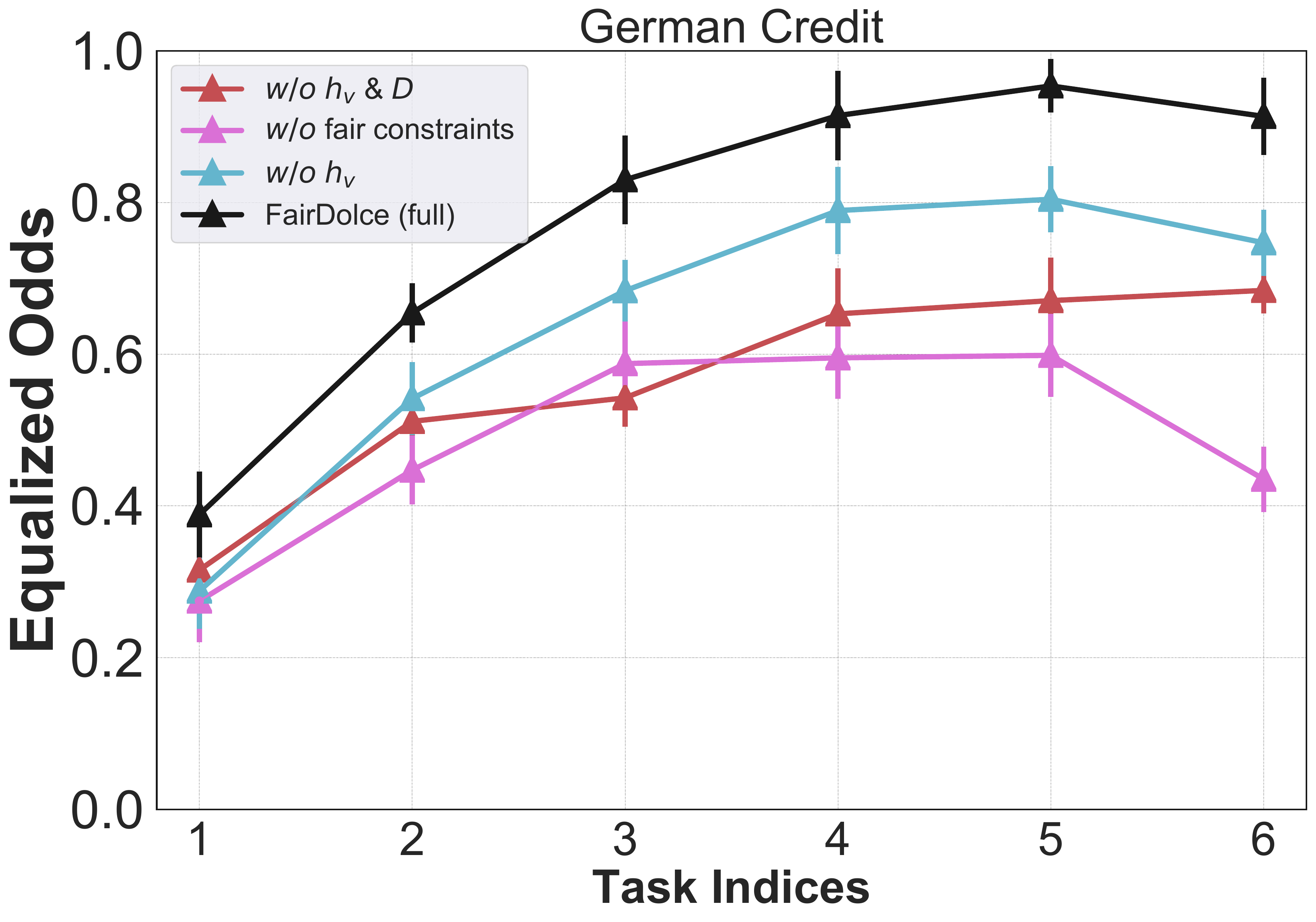}
    \end{subfigure}
    \begin{subfigure}[b]{0.245\textwidth}
        \includegraphics[width=\textwidth]{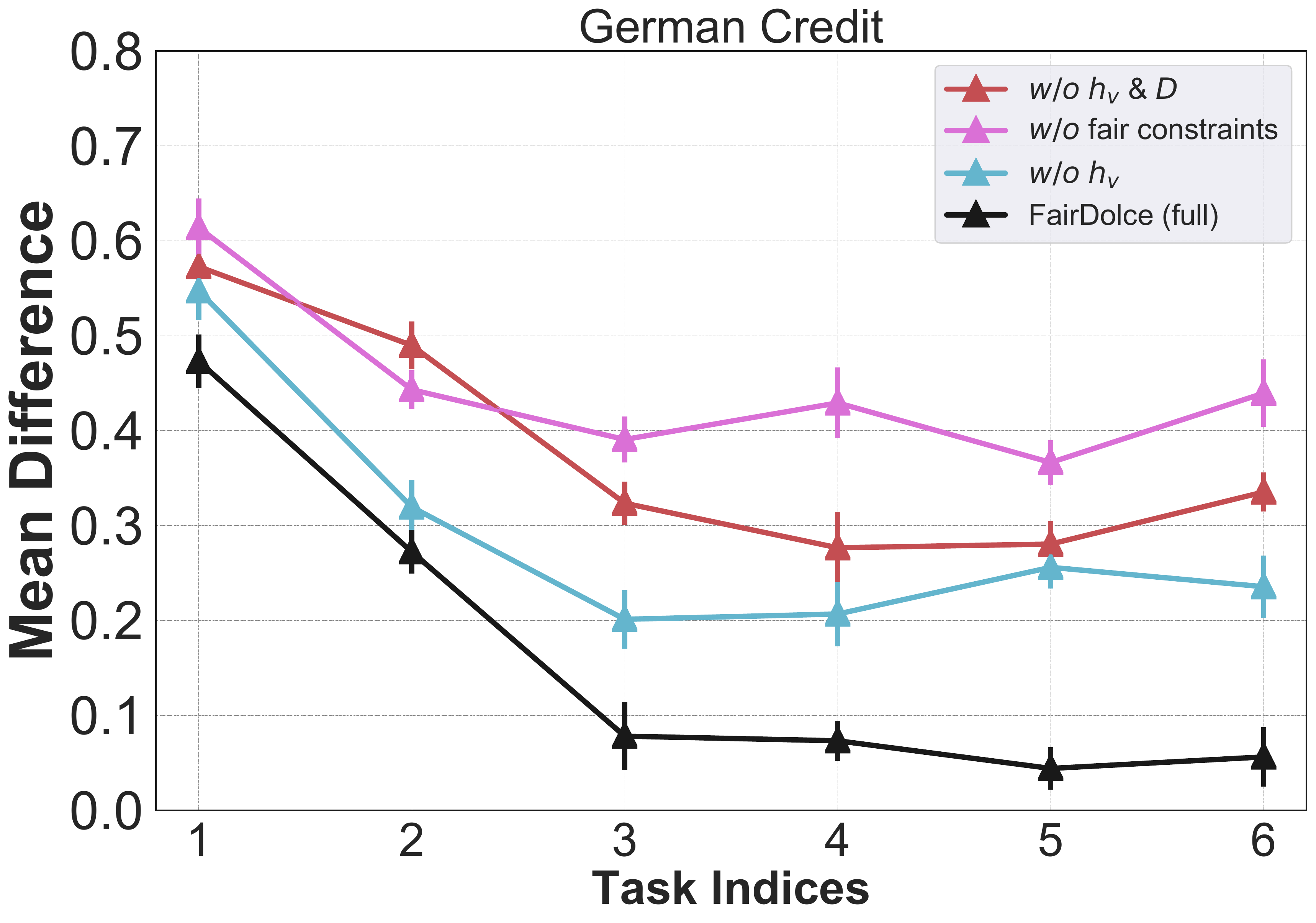}
    \end{subfigure}
    \begin{subfigure}[b]{0.245\textwidth}
        \includegraphics[width=\textwidth]{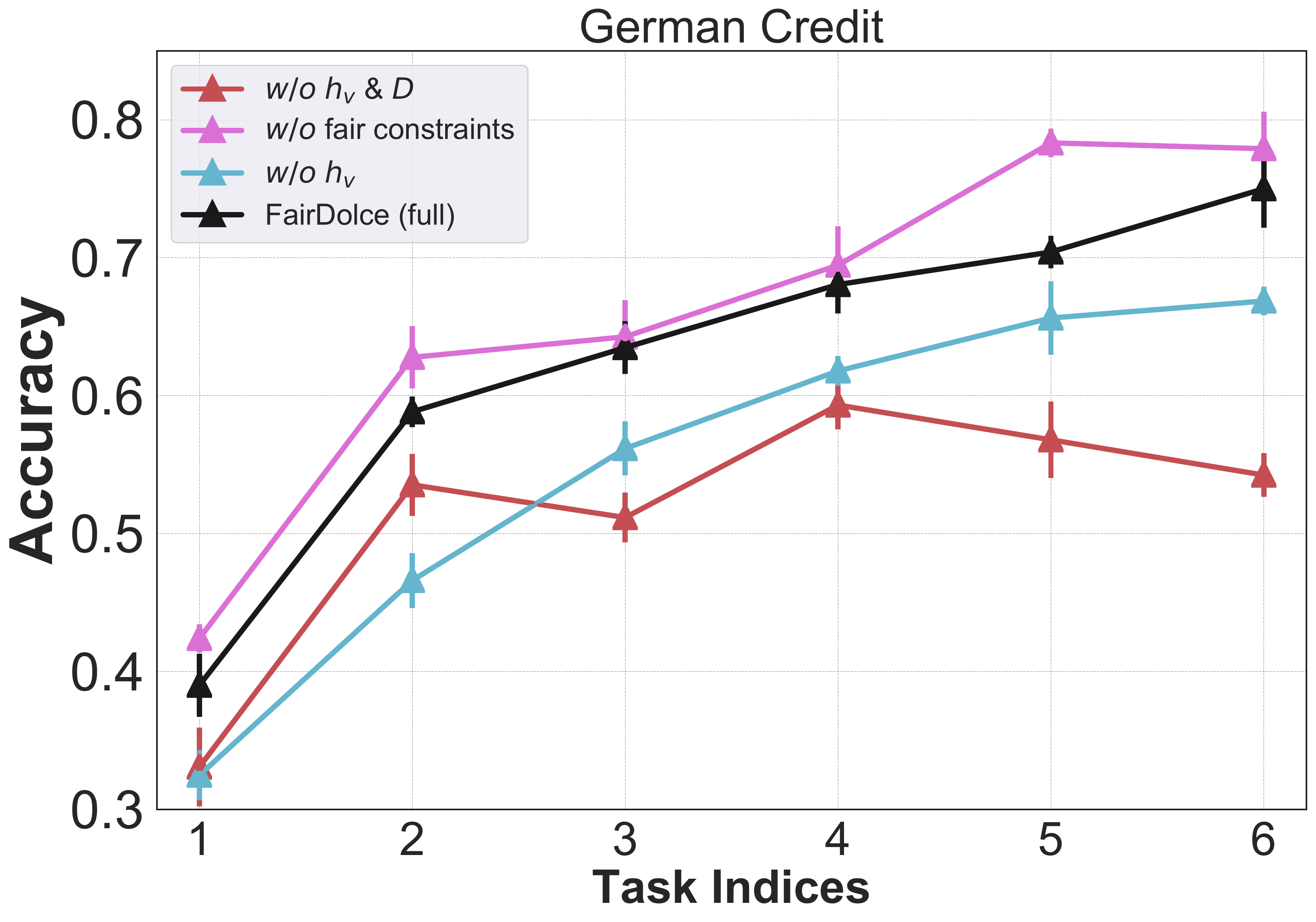}
    \end{subfigure}
    \caption{Ablation studies on the German Credit dataset.}
    \label{fig:abs-german}
\end{figure*}

\end{document}